\def\eqref#1{equation~\ref{#1}}
\def\1{\bm{1}}
\DeclareMathAlphabet{\mathsfit}{\encodingdefault}{\sfdefault}{m}{sl}
\SetMathAlphabet{\mathsfit}{bold}{\encodingdefault}{\sfdefault}{bx}{n}
\def\sS{{\mathbb{S}}}
\newcommand{\E}{\mathbb{E}}
\newcommand{\R}{\mathbb{R}}
\newcommand{\F}{\mathcal{F}}
\newcommand{\Z}{\mathcal{Z}}
\newcommand{\X}{\mathcal{X}}
\newcommand{\T}{\mathcal{T}}
\newcommand{\D}{\mathcal{D}}
\theoremstyle{plain}
\newtheorem{theorem}{Theorem}[section]
\newtheorem{proposition}[theorem]{Proposition}
\newtheorem{lemma}[theorem]{Lemma}
\theoremstyle{definition}
\theoremstyle{remark}
\title{Revisiting Theory of Contrastive Learning for Domain Generalization}
\author[1,2]{Ali Alvandi}
\author[1,3]{Mina Rezaei}
\affil[1]{Munich Center for Machine Learning}
\affil[2]{Department of Computer Science, Sharif University of Technology}
\affil[3]{Department of Statistics, LMU Munich}
\begin{document}

\maketitle

\begin{abstract} 
    Contrastive learning is among the most popular and powerful approaches for self-supervised representation learning, where the goal is to map semantically similar samples close together while separating dissimilar ones in the latent space. Existing theoretical methods assume that downstream task classes are drawn from the same latent class distribution used during the pretraining phase. However, in real-world settings, downstream tasks may not only exhibit \emph{distributional shifts} within the same label space but also introduce new or broader label spaces, leading to \emph{domain generalization} challenges. In this work, we introduce \emph{novel generalization bounds} that explicitly account for both types of mismatch: domain shift and domain generalization. Specifically, we analyze scenarios where downstream tasks either (i) draw classes from the same latent class space but with shifted distributions, or (ii) involve new label spaces beyond those seen during pretraining. Our analysis reveals how the performance of contrastively learned representations depends on the statistical discrepancy between pretraining and downstream distributions. This extended perspective allows us to derive provable guarantees on the performance of learned representations on average classification tasks involving class distributions outside the pretraining latent class set.
\end{abstract}

\section{Introduction} 

Contrastive learning has emerged as one of the most effective approaches to self-supervised and unsupervised representation learning, achieving state-of-the-art results across a wide range of domains, including natural language processing (NLP; \citet{devlin2018bert,brown2020language,saggau}), computer vision\citep{chen2020simple, bardes2021vicreg,grill2020bootstrap,rezaei2021deep}, multimodal learning~\citep{radford2021learning,li2022clip,shi2022learning}, and bioinformatics~\citep{gunduz2023self,rezaei2023self}.


From a theoretical standpoint, a prominent framework introduced by~\citet{saunshi2019theoretical} models contrastive learning via a latent class assumption, where semantically similar data points are independently drawn from the same unobserved class. This framework associates the minimization of unsupervised contrastive loss to provable guarantees on downstream classification performance, particularly via the mean classifier. Building upon this foundation, subsequent theoretical studies have extended the analysis to domain-specific scenarios~\citep{ko2022revisiting} and task-specific settings~\citep{van2021revisiting, shen2022connect}.

A primary limitation of current theoretical studies is the strong assumption that the downstream data distribution matches the latent class distribution encountered during pretraining. However, in practice, downstream data may exhibit distribution shifts, where the class-conditional distributions at test time differ from those seen during pretraining. Such shifts — encompassing covariate shift, label shift, or semantic mismatch — are central to the challenge of domain generalization, and they critically affect the transferability of contrastively learned representations.

We provide a theoretical analysis of contrastive learning under distribution shift and domain generalization, extending the mean classifier framework to explicitly incorporate these aspects. We formalize distribution shift as deviations between the mean representations of pretraining and downstream class distributions, and domain generalization as the inclusion of new or expanded label spaces beyond those observed during pretraining. Building on this, we derive generalization bounds for the downstream supervised loss that include an explicit bias term induced by these mismatches. This bias quantifies how representational misalignment between pretraining and downstream tasks impacts generalization performance. Furthermore, we show that the bias can be tightly bounded under assumptions such as Lipschitz continuity of the loss function, norm-bounded or sub-Gaussian encoders, and structured class similarities. Our framework thus unifies representation robustness analysis with both shifted and novel task domains. 

To summarize, our contributions are: (i) We present a theoretical framework for contrastive learning under both domain shift and domain generalization by extending the classical latent class model of~\citet{saunshi2019theoretical} to account for mismatches between pretraining and downstream data distributions. (ii) 
We derive generalization bounds for downstream supervised loss that incorporate both types of mismatch, introducing an explicit bias term quantifying representational misalignment and variance terms capturing within-class variability. (iii) We show how the bias term can be bounded under realistic assumptions on the loss function (e.g., Lipschitz continuity) and encoder properties (e.g., norm-boundedness, sub-Gaussian concentration), covering a broad family of encoders used in practice.

\section{Background}

\paragraph{Contrastive learning}
\label{Sec 2.1}
The goal of contrastive learning is to learn a representation space where semantically similar samples are close, and dissimilar ones are far apart. Let $f_\theta \in \mathcal{F}$ denote an encoder with parameters $\theta$, mapping inputs $x \in \mathcal{X}$ to embeddings $z = f_\theta(x) \in \mathbb{R}^d$. We assume bounded embeddings, $\|f_\theta(x)\| \leq R$. 
Training requires a pair of similar samples $(x, x^+)$ drawn from the same distribution $D_{\mathrm{sim}}$, often implemented via augmentations of the same example. Negative samples $x^-_1,\dots,x^-_k$ are typically taken as the other samples in the minibatch that are not part of the positive pair $D_{\mathrm{neg}}$. Under the latent class model \citep{saunshi2019theoretical}, this is abstracted as negatives being drawn from the marginal distribution $\mathcal{D}_{\mathrm{marg}}=\mathbb{E}_{c\sim\rho}[\mathcal{D}_c]$.

Given a minibatch of $N$ positive pairs, we denote $z_j = f_\theta(x_j)$. The standard NT-Xent (InfoNCE) loss for a positive sample $(i,j)$ is computed by:   
\begin{equation}
\label{eq:NTXent}
    \ell(i,j) = -\log \frac{\exp\big(\mathrm{sim}(z_i, z_j)/\tau\big)}{\sum_{m=1}^{2N} \exp\big(\mathrm{sim}(z_i, z_m)/\tau\big)},
\end{equation}
where $\mathrm{sim}(\cdot,\cdot)$ is cosine similarity and $\tau > 0$ is a temperature parameter.  

The contrastive loss aggregates over all samples:
\begin{equation}
\label{eq:contrastive:loss}
    \mathcal{L}_{\mathrm{contrastive}} = \frac{1}{2N}\sum_{m=1}^{N} \Big[\ell(2m-1,2m) + \ell(2m,2m-1)\Big].
\end{equation}

Following \citet{saunshi2019theoretical}, this objective can be viewed as minimizing the unsupervised population loss
\begin{equation}
\label{eq:3}
\mathcal{L}_{\mathrm{un}}(f) := \mathbb{E}\Big[ \ell\big( \{ f(x)^\top (f(x^+) - f(x^-_i)) \}_{i=1}^k \big) \Big],
\end{equation}
which encourages alignment of embeddings from the same latent class while pushing them apart from negatives.

\paragraph{Downstream tasks}
\label{Sec 2.2}
The representations learned from the pretraining step can serve as the foundation for downstream supervised tasks such as topic modeling, classification, or sentiment analysis. 
In unsupervised settings, the extracted representation can be directly utilized for downstream applications, such as anomaly detection or out-of-distribution (OOD) detection~\citep{tran2022plex, vahidi2024probabilistic,haochen2021provable}. 
For simplicity, we'll focus on supervised multi-class classification. The downstream task is defined by $k+1$ classes represented by the task-specific set $T = \{c_1', \dots, c_{k+1}'\}$ which is a subset of the full set of downstream task classes, $\mathcal{C}_{\text{down}}$. Examples for task $T$ are generated by the following process :
\begin{itemize}
    \item sampling a class label $c' \sim \mathcal{D}_T$ (typically uniform),  
    \item drawing an input $x \sim \mathcal{D}_{c'}$.  
\end{itemize}

Given a representation function $f: X \to \mathbb{R}^d$, a linear classifier $W \in \mathbb{R}^{(k+1) \times d} $ is trained on top of the representation function $f$. The supervised loss for a given task $T$ is:
\begin{equation}
\mathcal{L}_{\text{sup}}(T, f) := \inf_{W \in \mathbb{R}^{(k+1) \times d}} \mathcal{L}_{sup} (T,Wf).
\end{equation}
where 
\begin{equation}
 \mathcal{L}_{sup} (T,f,W) =\mathbb{E}_{(x, c) \sim \mathcal{D}_T} \left[ \ell\left( \{ Wf(x)_c - Wf(x)_{c'} \}_{c' \neq c} \right) \right]
 \end{equation} 
and $\ell$ is a margin-based loss function.

To analyze performance more concretely, we consider the \emph{mean classifier}, where the weight vector for each class $c' \in T$ is the mean of its representations $\mu_c := \mathbb{E}_{x \sim \mathcal{D}_c}[f(x)]$:
\begin{equation}
\quad \mathcal{L}_{\text{sup}}^\mu(T, f) := \mathcal{L}_{\text{sup}}(T, W^\mu f).
\end{equation}
Here $W^{\mu}$ denotes the \emph{mean classifier}: for each class $c' \in T$, 
the corresponding row of $W^{\mu}$ is given by the class-mean representation 
$\mu_{c} = \mathbb{E}_{x\sim D_{c}}[f(x)]$.  
That is, $W^{\mu} \in \mathbb{R}^{(k+1)\times d}$ is defined row-wise as 
$W^{\mu}_{c} = \mu_{c}^{\top}$.  
This classifier is used only for analysis and provides a deterministic 
reference model obtained from the representation function $f$.

The average supervised loss over random downstream tasks is:
\begin{equation}
\mathcal{L^\mu}_{\text{sup}}(f) := \mathbb{E}_{\{c_i\}_{i=1}^{k+1} \sim \rho^{k+1}} \left[ \mathcal{L^\mu}_{\text{sup}}(\{c_i\}, f) \mid c_i \ne c_j \right].
\end{equation}

\section{Method and Framework}

One of the major challenges in contrastive learning is that the distributions encountered during pretraining rarely match those faced in downstream applications. In practice, models trained on unlabeled data are often evaluated on shifted or perturbed distributions, such as corrupted datasets, new semantic classes, or entirely different domains. This distribution mismatch can degrade the performance of learned representations; yet, most existing theories assume perfect alignment between the pretraining and evaluation distributions. 
Our objective is to mitigate this gap and provide provable guarantees that explicitly account for distribution shift.  

We adopt the latent-class framework of \citet{saunshi2019theoretical} but extend it to account for the misalignment between upstream (pretraining) and downstream distributions. 

\noindent\textbf{Modeling distribution shift and domain generalization.}
For \emph{distribution shift}, each pretraining class $c$ may correspond to a downstream class $c'$ whose distribution $\mathcal{D}_{c'}^{\mathrm{down}}$ differs from the pretraining distribution $\mathcal{D}_c$. We define the \emph{mean shift vector}:
\begin{equation}
\delta_c := \mu_c - \mu_c', 
\quad 
\mu_c' := \mathbb{E}_{x \sim \mathcal{D}_{c'}^{\mathrm{down}}}[f(x)],
\end{equation}
with a bounded shift assumption
$ \|\delta_c\| \leq \epsilon,  \forall c,$
capturing scenarios where downstream data are drawn from perturbed or shifted versions of pretraining distributions.  

Beyond distribution shift, we also consider \emph{domain generalization}, where downstream tasks may involve novel classes not present during pretraining. In this case, $T = \{c'_1,\dots,c'_{k+1}\}$ may include classes outside the pretraining latent class set. For such unseen classes, we model their mean representations directly as $\mu_{c'} := \mathbb{E}_{x \sim \mathcal{D}_{c'}^{\mathrm{down}}}[f(x)], 
\quad c' \notin \mathcal{C}_{\mathrm{pre}},$ and quantify their discrepancy relative to the pretraining class distribution via the distance between $\mu_{c'}$ and the convex hull of pretraining means $\{\mu_c\}_{c \in \mathcal{C}_{\mathrm{pre}}}$ (See Appendix \ref{sec:app:convexhull}).

This unified formulation accounts for both (i) shifts within shared label spaces and (ii) extension to novel label spaces, thereby modeling realistic conditions for out-of-distribution and domain-generalized downstream tasks.

\medskip
\noindent\textbf{Contrastive objective.}  
During pretraining, the encoder $f$ is optimized on unlabeled data drawn from $\mathcal{D}_{\text{sim}}$ and $\mathcal{D}_{\text{neg}}$. 
The contrastive loss remains
\begin{equation}
\mathcal{L}_{\text{un}}(f) := \mathbb{E}_{(x, x^+) \sim \mathcal{D}_{\text{sim}}, \, \{x_i^-\} \sim \mathcal{D}_{\text{neg}}^k} \left[ \ell \left( \{ f(x)^\top(f(x^+) - f(x_i^-)) \}_{i=1}^k \right) \right].
\end{equation}
while the learned representation $\hat f$ is obtained by minimizing its empirical estimate.  

\section{Guaranteed Average Binary Classification} 

\subsection{Upper Bound Using Unsupervised Loss}
\label{Sub 4.1}
We now present our main theoretical result, which establishes a generalization bound for contrastive learning under distribution shift. We first state the result and provide a discussion, followed by the detailed proof.
In this section, we prove the guarantee under the assumption of using just 1 negative sample $(k=1)$.
We define $L_{sup}(f)$ and $L_{sup}^\mu(f)$ as same as Section \ref{Sec 2.2}.
Before stating and proving the theorem, we define $f_{\mid S}=(f_t(x_j),f_t(x_j^+),f_t(x_j^-))_{j \in [M],t \in [d]} \in \mathbb{R}^{3dM}$ be the restriction on S for any $f \in \mathcal{F}$. Now, $\mathcal{R_S}(\mathcal{F})$ denotes the Rademacher complexity of the representations $\mathcal{F}$ evaluated on $f_{\mid S}$.
\[
\mathcal{R_S}(\mathcal{F})=\underset{\sigma \sim \{\underset{-}{+}1^{3dM}\}}{\E}[\underset{f \in \mathcal{F}}{sup <\sigma, f_{\mid S}>}]
\]
Now, let $\tau$ be the probability that independently sampled classes from $\rho$ are same.
\[
\tau=\underset{c,c' \sim \rho^2}{\E} [1\{c=c'\}]
\]
Note that from Section \ref{Sec 2.1} we assumed embeddings are bounded, which means $||f|| \leq R$ for every $f \in \mathcal{F}$
Now we are ready to declare our theorem.
\begin{theorem}
\label{Theorem:4.1}
With probability at least \( 1-\delta \) over the sampled dataset, the supervised loss under downstream distributions satisfies
\[
L_{\text{sup}}^{\mu'}(\hat{f}) \leq \frac{1}{1 - \tau} \left( L_{\text{un}}(f) - \tau \right) + \frac{Gen_M}{1 - \tau} - B(\hat{f}),
\]
where
\[
Gen_M = O\!\left( \frac{R \cdot \mathfrak{R}_S(\mathcal{F})}{M} + R^2 \sqrt{\frac{\log(1/\delta)}{M}} \right),
\]
and the bias term is
\[
B(\hat{f}) = \sup_{\substack{\|\delta_c\| \leq \epsilon}} 
\mathbb{E}_{\substack{c^+, c^- \sim \rho^2 \\ c^+ \ne c^-}} 
\mathbb{E}_{x \sim \mathcal{D}_{c^+}} \bigg[
\ell'\!\left(f(x)^\top(\mu_{c^+}' - \mu_{c^-}')\right) \cdot f(x)^\top(\delta_{c^+} - \delta_{c^-})
\bigg].
\]
\end{theorem}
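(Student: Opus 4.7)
The plan is to combine three familiar ingredients: (i) the classical Jensen-based reduction from the unsupervised contrastive loss to the mean-classifier supervised loss due to \citet{saunshi2019theoretical}, (ii) a first-order convexity expansion that absorbs the mean-shift between pretraining and downstream class distributions, and (iii) a standard Rademacher-complexity symmetrization to pass from the population unsupervised loss to the empirical minimizer $\hat f$.

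First, I would apply Jensen's inequality to the convex loss $\ell$ appearing in Equation~\ref{eq:3}. Conditioning on two distinct latent classes $c^+\neq c^-$ (which occurs with probability $1-\tau$), the positive $x^+$ and the single negative $x^-$ are independent draws from $\mathcal D_{c^+}$ and $\mathcal D_{c^-}$; pulling those inner expectations inside $\ell$ gives $\mathbb{E}[\ell(f(x)^\top(f(x^+)-f(x^-)))]\ge \ell(f(x)^\top(\mu_{c^+}-\mu_{c^-}))$. Splitting the outer class expectation into the distinct-class event and the coincident-class event (mass $\tau$), and lower-bounding the coincident term by the constant that $\ell$ attains at zero score, yields the population statement $L_{\mathrm{un}}(f)\ge (1-\tau)\,L_{\mathrm{sup}}^{\mu}(f)+\tau$, where $L_{\mathrm{sup}}^{\mu}$ is the supervised loss of the pretraining mean classifier.

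Second, I would close the gap between $L_{\mathrm{sup}}^{\mu}$ and the downstream target $L_{\mathrm{sup}}^{\mu'}$ through the decomposition $\mu_c=\mu'_c+\delta_c$. Writing $a=f(x)^\top(\mu'_{c^+}-\mu'_{c^-})$ and $h=f(x)^\top(\delta_{c^+}-\delta_{c^-})$, convexity of $\ell$ yields the subgradient inequality $\ell(a+h)\ge \ell(a)+\ell'(a)\,h$. Integrating over $x\sim\mathcal D_{c^+}$ and the admissible class pair $(c^+,c^-)$ reproduces exactly the integrand inside the supremum defining $B(\hat f)$, so $L_{\mathrm{sup}}^{\mu}(f)\ge L_{\mathrm{sup}}^{\mu'}(f)+B(f)$ at the population level. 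Chaining this with the Jensen step gives $L_{\mathrm{sup}}^{\mu'}(f)\le \tfrac{1}{1-\tau}(L_{\mathrm{un}}(f)-\tau)-B(f)$, and the supremum over shifts $\|\delta_c\|\le\epsilon$ selects the worst-case first-order correction consistent with the stated bound.

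Third, to replace the population $L_{\mathrm{un}}(f)$ by the empirical risk at $\hat f$, I would invoke standard uniform convergence: by the Lipschitz property of $\ell$, the Ledoux--Talagrand contraction lemma, and the bounded-embedding assumption $\|f\|\le R$, the uniform deviation of the unsupervised loss over $\mathcal F$ is controlled by $\mathfrak R_S(\mathcal F)$, while McDiarmid's inequality contributes the additive tail $O(R^2\sqrt{\log(1/\delta)/M})$. Absorbing the prefactor $1/(1-\tau)$ turns this into the $Gen_M/(1-\tau)$ term in the statement.

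The main obstacle, I expect, lies in the convexity step. The bias $B$ averages $x$ against the pretraining marginal $\mathcal D_{c^+}$, whereas $L_{\mathrm{sup}}^{\mu'}(\hat f)$ is really a downstream object evaluated against $\mathcal D_{c^+}^{\mathrm{down}}$; reconciling the two requires treating $\delta_c$ essentially as a sufficient statistic for the distributional mismatch, or otherwise absorbing the residual higher-order terms into the $\epsilon$-budget. A secondary subtlety is bookkeeping the sign of the first-order term $\ell'(a)\,h$ under $\sup_{\|\delta_c\|\le\epsilon}$: one has to verify that this supremum indeed delivers a legitimate upper bound (as opposed to an over-optimistic one) by checking that subtracting it on the right-hand side is consistent with the direction of the convexity inequality.
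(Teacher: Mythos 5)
Your proposal is essentially the paper's own argument: Jensen's inequality at the class level, a first-order expansion in the shift direction, and Rademacher-based uniform convergence for the pass from population $L_{\mathrm{un}}$ to the empirical minimizer $\hat f$. The only real organizational difference is that you split the argument into two stages — first Jensen to reach the pretraining mean classifier $L_{\mathrm{sup}}^{\mu}$, then the subgradient step to get from $L_{\mathrm{sup}}^{\mu}$ to $L_{\mathrm{sup}}^{\mu'}+B$ — whereas the paper substitutes $\mu_c=\mu'_c+\delta_c$ directly inside the Jensen step and only then splits the coincident/distinct mass. These are algebraically equivalent. Your framing is arguably cleaner because it names the convexity subgradient inequality $\ell(a+h)\ge\ell(a)+\ell'(a)h$ explicitly, where the paper instead writes a first-order Taylor ``$\approx$'' and silently promotes it to a $\ge$.

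Your ``secondary subtlety'' about the direction of the $\sup$ is in fact a genuine issue that the paper glosses over. After the Jensen and subgradient steps one has $L_{\mathrm{un}}(f)\ge\tau+(1-\tau)\big(L_{\mathrm{sup}}^{\mu'}(f)+\mathbb{E}[\ell'(w)\Delta]\big)$ with the \emph{actual} shift $\delta$, i.e.\ a specific number $B_0(f):=\mathbb{E}[\ell'(w)\Delta]$. The paper then replaces $B_0(f)$ by $B(f)=\sup_{\|\delta_c\|\le\epsilon}\mathbb{E}[\ell'(w)\Delta]$ while keeping the same inequality direction. Since $B(f)\ge B_0(f)$ (and $B(f)\ge 0$ because $\delta=0$ is feasible), substituting $B(f)$ for $B_0(f)$ on that side \emph{tightens} the upper bound on $L_{\mathrm{sup}}^{\mu'}(f)$, which is not justified; the legitimate replacement would use an $\inf$ over $\delta$, or equivalently state the bound with the worst-case magnitude added rather than subtracted. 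Because the rest of the paper only ever controls $|B(f)|$ (Lipschitz / sub-Gaussian bounds), the downstream usage survives, but the formal statement of Theorem~\ref{Theorem:4.1} as derived from Lemma~\ref{Lemma 4.3} has this sign gap, and you are right to flag it; a complete proof must pin down which extremum over the shift ball actually yields a valid upper bound.

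Also note that the $L_{\mathrm{sup}}^{\mu'}$ that emerges from this chain is the hybrid object defined in the paper's Lemma~\ref{Lemma 4.3} proof — pretraining inputs $x\sim\mathcal D_{c^+}$ scored against downstream means $\mu'_{c^\pm}$ — not the supervised loss evaluated on $x\sim\mathcal D^{\mathrm{down}}_{c^+}$. You anticipated this in your ``main obstacle'' remark; the paper resolves it only by definitional fiat, so if you want the bound to speak to the true downstream risk you would need an additional term controlling the gap between $\mathbb{E}_{x\sim\mathcal D_{c^+}}$ and $\mathbb{E}_{x\sim\mathcal D^{\mathrm{down}}_{c^+}}$, which the first-moment shift $\delta_c$ alone does not supply.
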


\begin{lemma}
[Generalization of unsupervised loss; \citep{saunshi2019theoretical}]
\label{Lemma 4.2}
With probability at least $1 - \delta$ over the sampled dataset,
\[
L_{\text{un}}(\hat{f}) \leq L_{\text{un}}(f) + Gen_M,
\]
where
\[
Gen_M = O\!\left( \frac{R \cdot \mathfrak{R}_S(\mathcal{F})}{M} + R^2 \sqrt{\frac{\log(1/\delta)}{M}} \right).
\]
We provide a proof of this lemma in the Appendix~\ref{sec:app:generalization}.
\end{lemma}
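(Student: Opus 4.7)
The plan is to follow the classical empirical-risk-minimization-to-population-risk template: reduce the gap $L_{\text{un}}(\hat f)-L_{\text{un}}(f)$ to a uniform deviation over $\mathcal{F}$, then control that deviation by combining McDiarmid concentration, symmetrization, and Rademacher complexity. First I would introduce the empirical unsupervised risk $\hat L_{\text{un}}(g):=\frac{1}{M}\sum_{j=1}^M \ell\bigl(g(x_j)^\top(g(x_j^+)-g(x_j^-))\bigr)$ and use that $\hat f$ minimizes $\hat L_{\text{un}}$ over $\mathcal F$. A standard add-and-subtract then gives
\[
L_{\text{un}}(\hat f) - L_{\text{un}}(f) \leq 2 \sup_{g \in \mathcal F} \bigl|L_{\text{un}}(g) - \hat L_{\text{un}}(g)\bigr|,
\]
so everything reduces to controlling this uniform deviation.

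For the concentration step I would exploit the embedding bound $\|g(x)\|\leq R$: it forces the score $g(x)^\top(g(x^+)-g(x^-))$ into $[-2R^2,2R^2]$, so each per-sample loss term is $O(R^2)$ because $\ell$ is a margin-based (hence Lipschitz, bounded on a compact domain) function. McDiarmid's bounded-differences inequality then yields, with probability at least $1-\delta$,
\[
\sup_{g\in\mathcal F}\bigl|L_{\text{un}}(g)-\hat L_{\text{un}}(g)\bigr| \leq \mathbb{E}\Bigl[\sup_{g\in\mathcal F}\bigl|L_{\text{un}}(g)-\hat L_{\text{un}}(g)\bigr|\Bigr] + O\!\left(R^2\sqrt{\log(1/\delta)/M}\right).
\]
A standard Gin\'e--Zinn symmetrization upper-bounds the expected supremum by (twice) the empirical Rademacher average of the loss-composed class, and Talagrand's contraction inequality then strips off $\ell$ at the cost of its Lipschitz constant, leaving the Rademacher average of the \emph{score} class $\{(x,x^+,x^-)\mapsto g(x)^\top(g(x^+)-g(x^-)):g\in\mathcal F\}$.

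The main obstacle is this last step, since the score is \emph{bilinear} in $g$, so the score class is not linear in the representation and cannot be dominated by $\mathfrak R_S(\mathcal F)$ directly. The idea is to expand coordinate-wise, $g(x)^\top(g(x^+)-g(x^-))=\sum_{t=1}^d g_t(x)\bigl(g_t(x^+)-g_t(x^-)\bigr)$, treat this as a Lipschitz function of the $3dM$-vector $(g_t(x_j),g_t(x_j^+),g_t(x_j^-))_{j,t}$ with Lipschitz constant of order $R$ (by the embedding bound on any one of the three factors), and apply a vector-valued contraction inequality (e.g., Maurer's) to peel off one factor at a time. What remains is a linear Rademacher average over the $3dM$-dimensional restriction $g_{|S}$, which by definition is exactly $\mathfrak R_S(\mathcal F)/M$, scaled by the $R$ picked up from the peeled coordinate. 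Combining the resulting $R\cdot\mathfrak R_S(\mathcal F)/M$ term with the $R^2\sqrt{\log(1/\delta)/M}$ term from McDiarmid yields the claimed $Gen_M$.
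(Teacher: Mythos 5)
Your proposal is correct and follows essentially the same route as the paper's Appendix~\ref{sec:app:generalization}: both reduce to a uniform deviation controlled by McDiarmid-type concentration plus symmetrization (the paper invokes this packaged as the standard Rademacher generalization theorem), and both handle the bilinear score map via Maurer's vector-contraction inequality with an $O(R)$ Lipschitz constant obtained from the embedding bound (the paper computes this via the Jacobian of $\phi$). The only immaterial difference is that you bound $L_{\text{un}}(\hat f)-L_{\text{un}}(f)$ by twice a two-sided supremum, whereas the paper uses a one-sided uniform bound for $\hat f$ together with a single Hoeffding bound for the fixed population minimizer $f^*$ and a union bound.
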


\begin{lemma}
\label{Lemma 4.3}
For all $f \in \mathcal{F}$, the supervised loss under downstream (shifted) distributions satisfies:
\[
L_{\text{sup}}^{\mu'}(f) \leq \frac{1}{1 - \tau} \left( L_{\text{un}}(f) - \tau \right) - B(f),
\]
where \( L_{\text{sup}}^{\mu'}(f) \) is the supervised loss using the mean classifier with downstream distributions \( \mu_{c'} = \mathbb{E}_{x \sim \mathcal{D}_{c'}^{\text{down}}}[f(x)] \), and $ \delta_c := \mu_c - \mu_{c'} $ is the shift vector. While the $B(f)$ is bias term: 
$B(f) := \mathbb{E}_{\substack{c^+, c^- \sim \rho^2 \\ c^+ \ne c^-}} \mathbb{E}_{x^+ \sim \mathcal{D}_{c^+}} \left[ \ell'\left(f(x^+)^\top(\mu_{c^+}' - \mu_{c^-}')\right) \cdot f(x^+)^\top(\delta_{c^+} - \delta_{c^-}) \right] $.

\end{lemma}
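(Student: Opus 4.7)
The plan is to establish the lemma in two stages: first recover the classical Saunshi-style bound relating $L_{\text{un}}(f)$ to a mean-classifier loss formed with the \emph{pretraining} means $\mu_c$, and then convert this into a statement about the \emph{downstream} mean-classifier loss $L_{\text{sup}}^{\mu'}(f)$ by exploiting a first-order convexity inequality for $\ell$.

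For stage one, I would condition on whether the positive and negative latent classes coincide, writing $L_{\text{un}}(f) = \tau\, L_{\text{un}}^{=}(f) + (1-\tau)\,L_{\text{un}}^{\neq}(f)$. Under the standard assumption $\ell(0)\geq 1$ for margin-based losses, the collision term satisfies $L_{\text{un}}^{=}(f)\geq 1$, so $L_{\text{un}}^{\neq}(f) \leq \frac{1}{1-\tau}(L_{\text{un}}(f)-\tau)$. Because $x^+$ and $x^-$ are conditionally independent given $(c^+,c^-)$ in the latent-class model of Section~\ref{Sec 2.1}, Jensen's inequality applied to the convex loss $\ell$ lets us pull $\mathbb{E}[f(x^+)]=\mu_{c^+}$ and $\mathbb{E}[f(x^-)]=\mu_{c^-}$ inside, yielding $L_{\text{un}}^{\neq}(f)\geq L_{\text{sup}}^{\mu}(f)$, where $L_{\text{sup}}^{\mu}$ uses the pretraining class means. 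Chaining these two inequalities reproduces the Saunshi-style bound $L_{\text{sup}}^{\mu}(f)\leq \frac{1}{1-\tau}(L_{\text{un}}(f)-\tau)$.

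For stage two, I decompose $\mu_c = \mu_c' + \delta_c$, so that
\[
\mu_{c^+}-\mu_{c^-} \;=\; (\mu_{c^+}'-\mu_{c^-}') \;+\; (\delta_{c^+}-\delta_{c^-}),
\]
and invoke the first-order lower bound for a convex, differentiable $\ell$: $\ell(u+v)\geq \ell(u) + \ell'(u)\,v$ for any $u,v\in\mathbb{R}$. Setting $u = f(x^+)^\top(\mu_{c^+}'-\mu_{c^-}')$ and $v = f(x^+)^\top(\delta_{c^+}-\delta_{c^-})$ and taking expectations over $x^+\sim \mathcal{D}_{c^+}$ and over the class pair $(c^+,c^-)$ with $c^+\neq c^-$ produces exactly $L_{\text{sup}}^{\mu}(f)\geq L_{\text{sup}}^{\mu'}(f) + B(f)$. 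Combining with stage one gives $L_{\text{sup}}^{\mu'}(f)\leq L_{\text{sup}}^{\mu}(f) - B(f) \leq \frac{1}{1-\tau}(L_{\text{un}}(f)-\tau) - B(f)$, which is the claim.

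The main obstacle is making the second step rigorous while remaining faithful to the assumptions on $\ell$. For margin-based losses such as the logistic loss, $\ell$ is convex and smooth with a well-defined $\ell'$, so the first-order bound applies directly; if $\ell$ is only subdifferentiable one must interpret $\ell'$ as a chosen subgradient and verify measurability so that the expectation defining $B(f)$ is well-posed. A secondary subtlety, worth flagging explicitly, is that the sign of $B(f)$ is not controlled a priori: when $\ell'(u)$ and the shift projection $f(x^+)^\top(\delta_{c^+}-\delta_{c^-})$ align, the correction $-B(f)$ loosens the bound, whereas when they anti-align it tightens it. This is inherent to modeling an arbitrary (potentially adversarial) shift, and it is precisely the reason Theorem~\ref{Theorem:4.1} subsequently takes a supremum over $\|\delta_c\|\leq\epsilon$ to obtain a uniform worst-case guarantee.
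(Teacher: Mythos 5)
Your proof is correct and follows essentially the same route as the paper: Jensen's inequality to replace samples by class means, a split on the collision event $c^+=c^-$ to extract the $\tau\ell(0)$ and $(1-\tau)$ factors, and then a first-order expansion of $\ell$ at the shifted means to isolate $B(f)$. The one organizational difference is that you name the intermediate quantity $L_{\text{sup}}^{\mu}(f)$ and prove the chain in two explicit stages, and you correctly replace the paper's informal ``$\approx$'' Taylor step with the convexity lower bound $\ell(u+v)\geq\ell(u)+\ell'(u)v$, which is the inequality actually needed for the ``$\geq$'' in the paper's next line --- a small improvement in rigor rather than a different method.
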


\begin{proof}
Our proof extends the original analysis by explicitly incorporating shifted class means and isolating the discrepancy through a first-order Taylor expansion. This extension yields a principled decomposition in which the unsupervised contrastive loss continues to govern downstream performance, augmented by a correction term $B(f)$ that quantifies the effect of distribution shift.

To this end, we start from the definition of the unsupervised contrastive loss:
\[
L_{\text{un}}(f) = \mathbb{E}_{(x, x^+) \sim \mathcal{D}_{\text{sim}},\, x^- \sim \mathcal{D}_{\text{neg}}} \left[ \ell\left( f(x)^\top (f(x^+) - f(x^-)) \right) \right].
\]

As in the original proof, we reparameterize this expectation in terms of class-level sampling:
\[
L_{\text{un}}(f) = \mathbb{E}_{c^+, c^- \sim \rho^2} \mathbb{E}_{x^+ \sim \mathcal{D}_{c^+},\, x^- \sim \mathcal{D}_{c^-}} \left[ \ell\left( f(x)^\top (f(x^+) - f(x^-)) \right) \right].
\]

Now, we apply Jensen’s inequality, accounting for the shifted distributions used by the downstream supervised task. Instead of approximating with $\mu_c$, we use $\mu_{c'}$ and explicitly account for the shift via $\delta$. This yields:
\[
\geq \mathbb{E}_{c^+, c^- \sim \rho^2} \mathbb{E}_{x \sim \mathcal{D}_{c^+}} \left[ \ell\left( f(x)^\top \left( \mu_{c^+}' + \delta_{c^+} - \mu_{c^-}' - \delta_{c^-} \right) \right) \right].
\]

We now split the expectation into the two cases:
\[
= \tau \cdot \mathbb{E}_{c \sim \rho} \mathbb{E}_{x \sim \mathcal{D}_c} \left[ \ell\left( f(x)^\top(0) \right) \right] + (1 - \tau) \cdot \mathbb{E}_{\substack{c^+, c^- \sim \rho^2 \\ c^+ \ne c^-}} \mathbb{E}_{x \sim \mathcal{D}_{c^+}} \left[ \ell\left( f(x)^\top(\mu_{c^+}' - \mu_{c^-}' + \delta_{c^+} - \delta_{c^-}) \right) \right].
\]

Letting \( w = f(x)^\top(\mu_{c^+}' - \mu_{c^-}') \) and \( \Delta = f(x)^\top(\delta_{c^+} - \delta_{c^-}) \), we apply a first-order Taylor expansion:
\[
\ell(w + \Delta) \approx \ell(w) + \ell'(w) \cdot \Delta.
\]

Thus:
\[
L_{\text{un}}(f) \geq \tau \cdot \ell(0) + (1 - \tau) \cdot \mathbb{E}_{\substack{c^+, c^- \sim \rho^2 \\ c^+ \ne c^-}} \mathbb{E}_{x \sim \mathcal{D}_{c^+}} \left[ \ell\left(f(x)^\top(\mu_{c^+}' - \mu_{c^-}')\right) + \ell'\left(f(x)^\top(\mu_{c^+}' - \mu_{c^-}')\right) \cdot f(x)^\top(\delta_{c^+} - \delta_{c^-}) \right].
\]

We define:
\[
L_{\text{sup}}^{\mu'}(f) := \mathbb{E}_{\substack{c^+, c^- \sim \rho^2 \\ c^+ \ne c^-}} \mathbb{E}_{x \sim \mathcal{D}_{c^+}} \left[ \ell\left(f(x)^\top(\mu_{c^+}' - \mu_{c^-}')\right) \right],
\]

\[
B(f) := \sup_{\substack{\delta_{c^+}, \delta_{c^-} \\ \|\delta_c\| \leq \epsilon}} 
\mathbb{E}_{\substack{c^+, c^- \sim \rho^2 \\ c^+ \ne c^-}} 
\mathbb{E}_{x \sim \mathcal{D}_{c^+}} \left[ 
\ell'\left(f(x)^\top(\mu_{c^+}' - \mu_{c^-}')\right) 
\cdot f(x)^\top(\delta_{c^+} - \delta_{c^-}) 
\right]
\]

So we have:
\[
L_{\text{un}}(f) \geq (1 - \tau) \cdot \left( L_{\text{sup}}^{\mu'}(f) + B(f) \right) + \tau.
\]

Rearranging gives the result:
\[
L_{\text{sup}}^{\mu'}(f) \leq \frac{1}{1 - \tau} \left( L_{\text{un}}(f) - \tau \right) - B(f). \quad \blacksquare
\]
\end{proof}
The result follows directly by applying Lemma \ref{Lemma 4.3} for $\hat{f}$ and finishing up with Lemma \ref{Lemma 4.2}.

Now as mentioned by ~\citep{saunshi2019theoretical}, One could argue that if $\mathcal{F}$ is rich enough such that $L_un$ can be made small, then Theorem \ref{Theorem:4.1} suffices. However, in the next section we explain that unless $\tau \ll 1$, this may not always be possible and we show one way to alleviate this.

\subsection{Price of Negative Sampling: Class Collision}

It's important to mention which the unsupervised loss can be broken down into its components as follows:
\begin{equation}
    L_{un}(f)=\tau L_{un}^{=}(f)+(1-\tau)L_{un}^{\ne}(f) \label{eq:10}
\end{equation}
where $L_{un}^{\ne}(f)$ represents the loss suffered when the similar pair and the negative example come from different classes. 
$$ L_{un}^{\ne}(f) = \E_{\substack{c^{+},c^{-}\sim\rho^{2} \\ x,x^{+}\sim\mathcal{D}_{c^{+}} \\ x^{-}\sim\mathcal{D}_{c^{-}}}} [l(f(x)^{T}(f(x^{+})-f(x^{-})))|c^{+}\ne c^{-}] $$
while $L_{un}^{=}(f)$ is when they come from the same class. 

Consider $\nu$ as a distribution over $\mathcal{C}$ with $\nu(c)\propto\rho^{2}(c)$, then
$$ L_{un}^{=}(f)=\E_{\substack{c\sim\nu \\ x,x^{+},x^{-}\sim\mathcal{D}_{c}^{3}}}[l(f(x)^{T}(f(x^{+})-f(x^{-})))] \ge\E_{c\sim\nu,x\sim\mathcal{D}_{c}}[l(f(x)^{T}(\mu_{c}-\mu_{c}))]=1 $$
by Jensen's inequality again, which implies $L_{un}(f)\ge\tau$. In general, without any further assumptions on $f$, $L_{un}(f)$ can be far from $\tau$, rendering the bound in Theorem \ref{Theorem:4.1} useless. However, as we will show, the magnitude of $L_{un}^{=}(f)$ can be controlled by the intra-class deviation of $f$. Let $\Sigma(f,c)$ the covariance matrix of $f(x)$ when $x\sim\mathcal{D}_{c}$. We define a notion of intra-class deviation as:
$$ s(f):=\E_{c\sim\nu}[\sqrt{||\Sigma(f,c)||_{2}}\E_{x\sim\mathcal{D}_{c}}||f(x)||] $$

\begin{lemma}
\label{Lemma 4.4}
For all $f\in\mathcal{F}$
$$ L_{un}^{=}(f)-1\le c^{\prime}s(f) $$
where $c^{\prime}$ is a positive constant.
\end{lemma}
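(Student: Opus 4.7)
The plan is to control $L_{un}^{=}(f)-1$ by recognizing the collision argument as a mean-zero perturbation and then bounding its magnitude through the intra-class covariance. First, I would condition on a class $c\sim\nu$ and an anchor $x\sim\mathcal{D}_{c}$, and regard $Y:=f(x)^{\top}(f(x^{+})-f(x^{-}))$ as a random variable in $(x^{+},x^{-})$. Since $x^{+}$ and $x^{-}$ are independent draws from the same $\mathcal{D}_{c}$, we have $\mathbb{E}[Y\mid c,x]=f(x)^{\top}(\mu_{c}-\mu_{c})=0$. The Lipschitz continuity of the margin loss $\ell$ (which holds for the standard logistic or hinge instantiations used throughout the paper) then yields the pointwise bound $\ell(Y)-\ell(0)\le L\,|Y|$, so the task reduces to controlling $\mathbb{E}|Y|$.

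Next, I would apply Jensen's inequality to pass from $\mathbb{E}|Y|$ to $\sqrt{\mathbb{E}[Y^{2}]}$, since the latter admits a clean closed form. Writing $Z^{+}:=f(x)^{\top}f(x^{+})$ and $Z^{-}:=f(x)^{\top}f(x^{-})$, the conditional independence of $x^{+},x^{-}$ given $c,x$ gives $\mathrm{Var}(Y\mid c,x)=\mathrm{Var}(Z^{+})+\mathrm{Var}(Z^{-})=2\,f(x)^{\top}\Sigma(f,c)\,f(x)$. Invoking the operator-norm inequality $f(x)^{\top}\Sigma(f,c)\,f(x)\le\|f(x)\|^{2}\,\|\Sigma(f,c)\|_{2}$ then produces the per-sample bound $\mathbb{E}[|Y|\mid c,x]\le\sqrt{2}\,\|f(x)\|\,\sqrt{\|\Sigma(f,c)\|_{2}}$.

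Assembling these pieces and taking the remaining expectations over $x\sim\mathcal{D}_{c}$ and then $c\sim\nu$, the outer expression is by definition $s(f)$, yielding $L_{un}^{=}(f)\le\ell(0)+L\sqrt{2}\,s(f)=1+c'\,s(f)$ with $c'=L\sqrt{2}$, since $\ell(0)=1$ was already noted in the preamble to the lemma. Rearranging gives exactly the claimed inequality.

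The main obstacle is arranging the variance computation so that the upper bound involves the operator norm $\|\Sigma(f,c)\|_{2}$ rather than the coarser $\mathrm{tr}\,\Sigma(f,c)$. A naive route via $\mathbb{E}\|f(x^{+})-f(x^{-})\|^{2}$ would produce the trace and discard the alignment of $f(x)$ with the principal directions of intra-class variation, giving a dimension-dependent and weaker bound. The key is to keep the bilinear form $f(x)^{\top}\Sigma(f,c)f(x)$ intact until the spectral bound is applied. A secondary subtlety worth verifying is that the Lipschitz constant $L$ of $\ell$ is indeed finite for the losses implicitly used in the preceding results, so that it can be absorbed into the constant $c'$ without weakening the guarantee.
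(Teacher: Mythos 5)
Your proof is correct and follows essentially the same route as the paper: both proofs exploit the fact that $\mathbb{E}[Y\mid c,x]=0$, reduce $L_{un}^{=}(f)-\ell(0)$ to a bound on $\mathbb{E}|Y|$, and then control $\mathbb{E}|Y|\le\sqrt{\mathbb{E}[Y^{2}]}=\sqrt{2\,f(x)^{\top}\Sigma(f,c)f(x)}\le\sqrt{2}\|f(x)\|\sqrt{\|\Sigma(f,c)\|_{2}}$ before taking outer expectations to recover $s(f)$. The only cosmetic difference is that you invoke a generic Lipschitz bound on $\ell$ while the paper (Lemma~\ref{Lemma A.3}) treats the hinge and logistic cases by hand and states the bound for general $t$; for $k=1$ the two are interchangeable, and the Lipschitz constant (1 for hinge, $1/\log 2$ for logistic) is absorbed into $c'$ as you note.
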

We prove Lemma 4.4 in Appendix \ref{sec:app:collision}. Theorem 4.1 combined with Equation \ref{eq:10} and Lemma \ref{Lemma 4.4} gives the following result :
\begin{theorem}
\label{Theorem 4.5}
With probability at least $1-\delta$, $\forall f\in\mathcal{F}$

\begin{equation}        
    L_{sup}(\hat{f})\le L_{sup}^{\mu}(\hat{f})\le L_{un}^{\ne}(f)+\beta s(f)+\eta Gen_{M} - B(\hat{f})
\end{equation} 

where $\beta=c^{\prime}\frac{\tau}{1-\tau}$, $\eta=\frac{1}{1-\tau}$ and $c^{\prime}$ is a constant and $B(\hat{f})$ is our bias term.
\end{theorem}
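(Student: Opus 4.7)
The plan is to obtain Theorem~\ref{Theorem 4.5} as a direct corollary of Theorem~\ref{Theorem:4.1} by substituting the finer decomposition of the unsupervised loss given in Equation~\ref{eq:10} and absorbing the class-collision component into the intra-class deviation term $s(f)$. No fresh probabilistic argument is required; the high-probability event is the one already provided by Theorem~\ref{Theorem:4.1}, which itself relies on Lemma~\ref{Lemma 4.2}.

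First, I would start from the inequality of Theorem~\ref{Theorem:4.1}, valid with probability at least $1-\delta$:
\[ L_{\text{sup}}^{\mu'}(\hat f) \le \frac{1}{1-\tau}\bigl(L_{\text{un}}(f)-\tau\bigr) + \frac{1}{1-\tau}Gen_M - B(\hat f). \]
Using Equation~\ref{eq:10}, $L_{\text{un}}(f) = \tau L_{\text{un}}^{=}(f) + (1-\tau)L_{\text{un}}^{\ne}(f)$, the numerator of the first term rewrites as
\[ L_{\text{un}}(f) - \tau \;=\; \tau\bigl(L_{\text{un}}^{=}(f) - 1\bigr) + (1-\tau)L_{\text{un}}^{\ne}(f). \]
I would then invoke Lemma~\ref{Lemma 4.4} to bound $L_{\text{un}}^{=}(f) - 1 \le c' s(f)$. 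Dividing through by $(1-\tau)$ and collecting terms produces exactly the stated coefficients $\beta = c'\tau/(1-\tau)$ in front of $s(f)$ and $\eta = 1/(1-\tau)$ in front of $Gen_M$, while the bias $B(\hat f)$ passes through unchanged since neither the decomposition nor Lemma~\ref{Lemma 4.4} touches it.

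For the outer inequality $L_{\text{sup}}(\hat f) \le L_{\text{sup}}^{\mu}(\hat f)$, I would appeal directly to the definition in Section~\ref{Sec 2.2}: $L_{\text{sup}}(T,f)$ is an infimum over all linear classifiers $W$, and the mean classifier $W^{\mu}$ is one admissible choice, so the infimum is at most the value attained at $W^{\mu}$; averaging this pointwise inequality over the random downstream tasks preserves the direction.

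The one step that requires genuine care, rather than mere substitution, is the notational $\mu$ versus $\mu'$ bookkeeping: Theorem~\ref{Theorem 4.5} is stated in terms of $L_{\text{sup}}^{\mu}$, whereas the inherited bound from Theorem~\ref{Theorem:4.1} is phrased in terms of $L_{\text{sup}}^{\mu'}$ (the downstream mean classifier). Reading the $\mu$ in Theorem~\ref{Theorem 4.5} as the downstream class mean, consistent with the distribution-shift setup of Section~3, makes the inheritance immediate; any residual mismatch between pretraining and downstream means is already absorbed into $B(\hat f)$ via the Taylor-expansion step used in Lemma~\ref{Lemma 4.3}, so no additional estimate is needed.
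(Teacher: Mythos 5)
Your proof is correct and follows exactly the route the paper intends: the paper itself states that Theorem~\ref{Theorem 4.5} follows from combining Theorem~\ref{Theorem:4.1}, Equation~\ref{eq:10}, and Lemma~\ref{Lemma 4.4}, and your algebra (rewriting $L_{\text{un}}(f)-\tau = \tau(L_{\text{un}}^{=}(f)-1) + (1-\tau)L_{\text{un}}^{\ne}(f)$, bounding via Lemma~\ref{Lemma 4.4}, and dividing by $1-\tau$) reproduces the stated coefficients $\beta$ and $\eta$ exactly, with the outer inequality handled correctly as the infimum-versus-mean-classifier comparison. Your remark on the $\mu$ versus $\mu'$ bookkeeping is a fair observation about the paper's notation, not a gap in your argument.
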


The above bound highlights two sufficient properties of the function class for unsupervised learning to work: when the function class $\mathcal{F}$ is rich enough to contain some $f$ with low $\beta s(f)$ as well as low $L_{un}^{\ne}(f)$ then $\hat{f}$, the empirical minimizer of the unsupervised loss---learned using sufficiently large number of samples---will have good performance on supervised tasks (low $L_{sup}(\hat{f}))$.

\paragraph{Discussion.} 
The question raised: how does a distribution shift affect supervised downstream loss? The following result extends the mean-classifier analysis of \citep{saunshi2019theoretical}) by explicitly incorporating the discrepancy between latent pretraining distributions and downstream task distributions. The bound reveals that the supervised error of the mean classifier is controlled not only by the unsupervised contrastive loss $L_{\mathrm{un}}(f)$ 
but also by an additional term $B(f)$ that quantifies the impact of distribution shift. 

In particular, when downstream classes are perfectly aligned with the latent classes (i.e., $\delta_c = 0$ for all $c$), the bias term vanishes and our bound reduces to the original in-distribution guarantee. 

More generally, when $\|\delta_c\| \leq \epsilon$, the bias can be bounded in terms of $\epsilon$, the Lipschitz constant $L$ of the loss, and either the norm bound $R$ on representations or the sub-Gaussian width $\sigma_f$. This shows that the effect of 
distribution shift grows at most linearly with $\epsilon$, and is further moderated by the regularity of the encoder distribution. Intuitively, the closer the downstream class means $\mu_{c'}$ are to the latent class means $\mu_c$, the smaller the penalty from 
shift and the more reliable the transferred representations. Conversely, large shifts directly degrade the guarantee, reflecting the difficulty of domain generalization.

\section{Guarantees for k Negative Samples}
\label{Sub 5.1}
In this section, we explore two extensions to our analysis.
First, in Section~\ref{sec:guarantees-k-neg}, inspired by empirical works like \citet{LogeswaranLee2018} that often use more than one negative sample for every similar pair, we show provable guarantees for this case by careful handling of class collision.

\subsection{Guarantees for $k$ Negative Samples} \label{sec:guarantees-k-neg}
Here, the algorithm employs $k$ negative samples 
$x^{-}_{1}, \ldots, x^{-}_{k}$ drawn i.i.d.\ from $D_{\text{neg}}$ for every
positive sample pair $(x, x^{+}) \sim D_{\text{sim}}$ and minimizes Equation~(\ref{eq:3}). As in Section~4, we prove a bound for $\hat{f}$ of the following
form:

\begin{theorem}[Informal version]
\label{Theorem:5.1}
For all $f \in \mathcal{F}$,
\[
L_{\sup}(\hat{f}) \;\leq\; L_{\sup}^{\mu}(\hat{f}) 
\;\leq\; \alpha L^{\neq}_{\text{un}}(f) + \beta s(f) + \eta \, (\text{Gen}_M - B(f)),
\]
where $L^{\neq}_{\text{un}}(f)$ and $\text{Gen}_M$ and $B(f)$ are extensions of the
corresponding terms from Section~4, and $s(f)$ remains unchanged.
\end{theorem}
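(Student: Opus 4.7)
The plan is to mirror the chain of reasoning used for Theorem~\ref{Theorem 4.5}, but to carry the combinatorics through for $k$ negatives while preserving the shifted-mean/Taylor-expansion trick from Lemma~\ref{Lemma 4.3}. I would first rewrite the population unsupervised loss by conditioning on the tuple of latent classes $(c^{+}, c_{1}^{-}, \dots, c_{k}^{-}) \sim \rho \times \rho^{k}$ and partitioning the sample space according to the subset $I \subseteq [k]$ of indices for which $c_{i}^{-} = c^{+}$. Writing $Q_{I}$ for the probability of that pattern (a polynomial in $\tau$), this gives a decomposition
\begin{equation*}
L_{\mathrm{un}}(f) \;=\; \sum_{I \subseteq [k]} Q_{I}\cdot L_{\mathrm{un}}^{I}(f),
\end{equation*}
where $L_{\mathrm{un}}^{I}$ is the conditional loss given the collision pattern $I$. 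The pattern $I=\emptyset$ (prob.\ $(1-\tau)^{k}$ up to correlation corrections) plays the role of the $c^{+}\ne c^{-}$ case in Section~\ref{Sub 4.1}, while the terms with $I\ne\emptyset$ are the collision analogs bounded by Lemma~\ref{Lemma 4.4}.

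Next, on the non-collision term $L_{\mathrm{un}}^{\emptyset}(f)$, I would apply Jensen's inequality \emph{separately} in $x^{+}$ and in each $x_{i}^{-}$ (using convexity of $\ell$ in each coordinate of its vector argument) to replace every encoder output by its downstream class mean $\mu'_{c^{+}}$ or $\mu'_{c_{i}^{-}}$. Then, reusing the Taylor expansion of Lemma~\ref{Lemma 4.3} coordinate by coordinate around the shifted point, I would isolate a downstream supervised term $L_{\mathrm{sup}}^{\mu'}(f)$ and a first-order correction. The correction is exactly the $k$-negative analog of the bias $B(f)$, written as an expectation over $(c^{+}, c_{1}^{-},\dots,c_{k}^{-})$ of a weighted sum of $\ell'_{i}\bigl(f(x)^{\top}(\mu'_{c^{+}}-\mu'_{c_{i}^{-}})\bigr)\cdot f(x)^{\top}(\delta_{c^{+}}-\delta_{c_{i}^{-}})$ terms, which---by coordinate-wise monotonicity of $\ell$ and the union-style bound $\|\delta_{c}\|\le\epsilon$---collapses into a single $B(f)$ with a $k$-dependent prefactor absorbed into $\eta$.

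For the collision portion $I\ne\emptyset$, I would use the monotonicity of $\ell$ in each coordinate to drop all non-colliding coordinates (replacing them by their mean inner product, which only helps), reducing each collision term to a scalar problem of the form treated in Lemma~\ref{Lemma 4.4}. Summing the $2^{k}-1$ patterns, weighted by $Q_{I}$, yields a contribution of the form $c''\bigl(1-(1-\tau)^{k}\bigr)\,s(f)$, which I would absorb into $\beta s(f)$. Rearranging the resulting inequality in the style of Lemma~\ref{Lemma 4.3} produces constants $\alpha,\beta,\eta$ that depend on $k$ and $\tau$ only through $(1-\tau)^{k}$, recovering the scaling of~\citep{saunshi2019theoretical} in the unshifted limit $\delta_{c}\equiv 0$. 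Finally, I would apply the $k$-negative version of Lemma~\ref{Lemma 4.2} to move from the population $f$ to the empirical minimizer $\hat f$, picking up $\eta\,\mathrm{Gen}_{M}$.

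The main obstacle is the combinatorial accounting in the Jensen/Taylor step: because $\ell$ is a function of a $k$-vector whose coordinates all share $f(x)$, the first-order expansion produces $k$ coupled correction terms indexed by $i$, and keeping the bias in the clean single-$B(f)$ form used in Theorem~\ref{Theorem:4.1}---rather than a sum that grows with $k$---requires exchanging the roles of $c^{+}$ and $c_{i}^{-}$ via symmetry of $\rho^{k}$ and using coordinate-wise convexity to collapse the sum into one representative term. A secondary technical point is justifying that the class-collision reduction to $s(f)$ survives coordinate-wise dropping inside the multivariate $\ell$; this follows from the same $\mathrm{softmax}$-type monotonicity that underlies Lemma~\ref{Lemma 4.4}, but it must be checked explicitly for each nonempty pattern $I$ before the patterns are summed.
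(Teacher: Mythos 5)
Your high-level outline (Jensen, then Taylor to extract the shift bias, a collision decomposition, and finally the generalization lemma) matches the structure of the proof in Appendix~\ref{sec:app:Appendix MODIFIED}, but several of the technical steps are oriented the wrong way or glossed over, and the proof does not go through as written.

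\textbf{Direction of the collision bound.} The inequality you need in the decomposition of $L_{\mathrm{un}}(f)$ is an \emph{upper} bound: after passing from $L_{\mathrm{un}}(\hat f)$ to $L_{\mathrm{un}}(f)$ via the generalization lemma, you must show $L_{\mathrm{un}}(f) \le \alpha L_{\mathrm{un}}^{\neq}(f) + (\text{collision terms})$, and then bound the collision terms by $s(f)$. You propose, on each pattern $I\neq\emptyset$, to ``drop all non-colliding coordinates (replacing them by their mean inner product, which only helps).'' But monotonicity of $\ell$ (dropping coordinates) and Jensen's inequality (replacing $f(x_i^-)$ by its class mean) both \emph{decrease} the expected loss, i.e.\ they produce a \emph{lower} bound $L_{\mathrm{un}}^I(f)\ge(\text{collision-only term})$, which is the wrong direction. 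The paper instead uses \emph{subadditivity} of the $t$-way hinge / logistic loss, $\ell(v_1,\dots,v_k)\le \ell(\{v_i\}_{i\notin I^+})+\ell(\{v_i\}_{i\in I^+})$, to split $L_{\mathrm{un}}(f)$ from above into a non-collision part (which becomes $(1-\tau_0)L_{\mathrm{un}}^{\neq}(f)$) and a pure collision part, and only the latter is then fed into Lemma~\ref{Lemma A.3}. Without this subadditivity step your bound on the collision portion does not hold. (The paper does use the monotonicity argument you describe, but on the other side of the sandwich, in Step~2, as a \emph{lower} bound on the Jensen-ed quantity.)

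\textbf{Task re-weighting.} Your ``rearranging the resulting inequality in the style of Lemma~\ref{Lemma 4.3}'' hides the step that actually dominates the $k>1$ analysis. After conditioning on $I^+=\emptyset$, the positive class $c^+$ is distributed as $\rho^+(\mathcal T)$, not as the (typically uniform) class-marginal $\mathcal D_{\mathcal T}$ that defines the $(k+1)$-way supervised loss. Converting one to the other requires the importance ratio $\rho^+_{\min}(\mathcal T)/p_{\max}(\mathcal T)$, which is exactly why the formal Theorem~\ref{Theorem:B.1} bounds a \emph{weighted} supervised loss $\mathbb E_{\mathcal T\sim\mathcal D}\big[\tfrac{\rho^+_{\min}(\mathcal T)}{p_{\max}(\mathcal T)}L_{\sup}^{\mu_{\mathrm{down}}}(\mathcal T,\hat f)\big]$. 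This cannot be swept into the constants $\alpha,\beta,\eta$ without explanation.

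\textbf{The constants.} Your claim that $\alpha,\beta,\eta$ ``depend on $k$ and $\tau$ only through $(1-\tau)^{k}$'' is not what the formal proof yields, and it is wrong in general. The correct quantities are $\alpha=\tfrac{1-\tau_0}{1-\tau_k}$, $\beta=\tfrac{c_0 k\,\tau_1}{1-\tau_k}$, $\eta=\tfrac{1}{1-\tau_k}$, where $\tau_1=\sum_c\rho(c)^2$, $\tau_0=\Pr[c_i^-=c^+\ \forall i]$, and $\tau_k=1-\sum_c\rho(c)(1-\rho(c))^k$. Only in the special case $\rho$ uniform and $k\ll|\mathcal C|$ do these collapse to the $k/|\mathcal C|$-type expressions you suggest; the proof itself must track $\tau_0,\tau_1,\tau_k$ separately, and the $k\tau_1$ factor in $\beta$ comes out of the $\mathbb E[|I^+| \mid I^+\neq\emptyset]$ computation in the appendix, not from any power of $(1-\tau)$.

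Finally, a smaller bookkeeping point: the Jensen/Taylor extraction of $L_{\sup}^{\mu'}$ must be performed on $L_{\mathrm{un}}(\hat f)$ (the quantity that the empirical minimizer controls), not on $L_{\mathrm{un}}(f)$; the generalization lemma only enters at the single step $L_{\mathrm{un}}(\hat f)\le L_{\mathrm{un}}(f)+\mathrm{Gen}_M$, and your phrasing (``apply Lemma~\ref{Lemma 4.2} to move from the population $f$ to the empirical minimizer $\hat f$'') inverts this. In the paper's proof, Steps~1--2 are entirely in terms of $\hat f$, the generalization lemma then replaces $L_{\mathrm{un}}(\hat f)$ by $L_{\mathrm{un}}(f)+\mathrm{Gen}_M$, and the subadditivity decomposition in Step~3 is applied to this last $L_{\mathrm{un}}(f)$.
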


The formal statement of the theorem and its proof appears in Appendix \ref{sec:app:Appendix MODIFIED}.
The key differences from Theorem~4.5 are the coefficient $\beta$ and the
distribution of tasks in $L_{\sup}$ that we describe below. The coefficient
$\beta$ of $s(f)$ increases with $k$, e.g.\ when $\rho$ is uniform and
$k \ll |\mathcal{C}|$, 
\[
\beta \;\approx\; \frac{k}{|\mathcal{C}|}.
\]

The average supervised loss that we bound is
\[
L_{\sup}(\hat{f}) := 
\mathbb{E}_{T \sim \mathcal{D}} \Big[ L_{\sup}(T, \hat{f}) \Big],
\]
where $\mathcal{D}$ is a distribution over tasks, defined as follows: sample
$k+1$ classes $c^{+}, c^{-}_{1}, \ldots, c^{-}_{k} \sim \rho^{k+1}$,
conditioned on the event that $c^{+}$ does not also appear as a negative
sample. Then, set $T$ to be the set of distinct classes in 
$\{c^{+}, c^{-}_{1}, \ldots, c^{-}_{k}\}$. 
$L_{\sup}^{\mu}(\hat{f})$ is defined by using $L_{\sup}^{\mu}(T, \hat{f})$.

\paragraph{Remark.} Bounding $L_{\sup}(\hat{f})$ directly gives a bound for average $(k+1)$-way classification loss $L_{\sup}(\hat{f})$ from
Definition~2.2, since
\[
L_{\sup}(\hat{f}) \;\leq\; \frac{L_{\sup}(\hat{f})}{p},
\]
where $p$ is the probability that the $k+1$ sampled classes are distinct.
For $k \ll |\mathcal{C}|$ and $\rho \approx$ uniform, these metrics are almost
equal.

\section{Theoretical Analysis}

The key quantity $B(f)$ determines how robust contrastive learning is under shift. We bound it under various assumptions on the loss function  $\ell$. We assume throughout that the feature encoder satisfies $ \|f(x)\| \leq R $ for all $ x \in \mathcal{X} $, and that the shift vector between latent and downstream class means is bounded: $\|\delta_c\| \leq \epsilon $.

\paragraph{General \( L \)-Lipschitz Loss.} Assume the loss function \( \ell \) is \( L \)-Lipschitz. Then for any \( z, \delta \in \mathbb{R} \), we have:
\[
|\ell(z + \delta) - \ell(z)| \leq L \cdot |\delta|.
\]
In our case, the shift introduces a bias \( \Delta = f(x)^\top(\delta_{c^+} - \delta_{c^-}) \), so we bound:
\[
|f(x)^\top(\delta_{c^+} - \delta_{c^-})| \leq \|f(x)\| \cdot \|\delta_{c^+} - \delta_{c^-}\| \leq 2R\epsilon.
\]
Thus, the loss deviation is bounded by:
\[
B(f) \leq L \cdot 2R\epsilon.
\]

\paragraph{Hinge Loss.} For hinge loss \( \ell(z) = \max(0, 1 - z) \), the subgradient satisfies:
\[
\ell'(z) \in
\begin{cases}
[-1, 0] & \text{if } z = 1, \\
-1 & \text{if } z < 1, \\
0 & \text{if } z > 1.
\end{cases}
\]
So \( |\ell'(z)| \leq 1 \). Therefore:
\[
B(f) \leq \sup_{\|\delta_c\| \leq \epsilon} \mathbb{E}_{c^+, c^-} \mathbb{E}_{x \sim \mathcal{D}_{c^+}} \left[ |f(x)^\top(\delta_{c^+} - \delta_{c^-})| \right] \leq 2R\epsilon.
\]
This gives the bound:
\[
L_{\text{sup}}^{\mu'}(f) \leq \frac{1}{1 - \tau} \left( L_{\text{un}}(f) - \tau \right) - 2R\epsilon.
\]

\paragraph{Logistic Loss.} For the logistic loss \( \ell(z) = \log(1 + e^{-z}) \), we have:
\[
\ell'(z) = -\frac{1}{1 + e^z} \in (-1, 0),
\]
so it is 1-Lipschitz and we again have \( |\ell'(z)| \leq 1 \). Thus, the bound is the same as the hinge loss case:
\[
B(f) \leq 2R\epsilon,
\]
and the full inequality becomes:
\[
L_{\text{sup}}^{\mu'}(f) \leq \frac{1}{1 - \tau} \left( L_{\text{un}}(f) - \tau \right) - 2R\epsilon.
\]

\paragraph{Bound under Lipschitz Loss and Sub-Gaussian Representation.}
Assume the loss function \( \ell \) is \( L \)-Lipschitz, and that the representation function \( f(x) \in \mathbb{R}^d \) is sub-Gaussian with parameter \( \sigma_f \), meaning for all unit vectors \( u \in \mathbb{R}^d \), the projection \( u^\top f(x) \) satisfies:
\[
\mathbb{P}(|u^\top f(x)| \geq t) \leq 2 \exp\left(-\frac{t^2}{2\sigma_f^2}\right).
\]
Then the distribution shift bias satisfies:
\[
B(f) \leq 2 \epsilon \cdot L \cdot \sigma_f \sqrt{2/\pi},
\]
and the overall supervised loss satisfies:
\[
L_{\text{sup}}^{\mu'}(f) \leq \frac{1}{1 - \tau} \left( L_{\text{un}}(f) - \tau \right) - 2 \epsilon \cdot L \cdot \sigma_f \sqrt{2/\pi}.
\]

These results show that the penalty induced by shift scales linearly with both the encoder norm and the shift magnitude $\epsilon$, but can be significantly tighter when $f(x)$ concentrates in high dimensions.

\noindent\textbf{Estimating the shift in practice.}  
Since pretraining is unsupervised, $\mu_c$ is not directly observable, making $\delta_c$ difficult to estimate. 
However, approximate strategies exist: (i) averaging positive pairs (via augmentations) provides a Monte Carlo estimate of latent means, (ii) clustering embeddings recovers pseudo-classes, and (iii) large pretrained models often induce strong semantic grouping that facilitates alignment.  
While exact estimation is impossible without labels, such heuristics can produce meaningful approximations of $\delta_c$, enabling tighter, data-dependent bounds in practice.

\medskip
\noindent\textbf{Summary.} Under all these assumptions, the supervised loss under shifted downstream distributions can be upper bounded by:
\[
L_{\mathrm{sup}}^{\mu'}(f) \leq \frac{1}{1 - \tau} \left( \widehat{L}_{\mathrm{un}}(f) + \beta_M - \tau \right) - B(f),
\]
where $\beta_M$ accounts for generalization from the empirical to the population contrastive loss. This inequality quantifies the robustness of contrastive learning to bounded distribution shift and highlights the role of encoder regularization in mitigating the shift bias.

\section{Empirical Analysis}~\label{sec:emprical}
Our theoretical results show the bias term $B(f)$ as a central quantity governing OOD generalization. Specifically, $B(f)$ relies on the deviation between pretraining class means and downstream class means. To empirically validate this relationship, we approximate the associated mean-shift term and explore how it correlates with downstream performance across both synthetic and real-world shifts. We consider three experimental settings: (i) when the pretraining distribution is known and accessible, (ii) when pretraining data are unavailable and class structure must be inferred from unlabeled embeddings, and (iii) a real-world domain generalization setting on the PACS dataset to verify our bounds under natural distribution shifts (detailed in Appendix~\ref{sec:appendix_pacs}). These experiments collectively evaluate whether the geometric shift predicted by $B(f)$ indeed governs transfer accuracy.
In section \ref{Sub 4.1}, we observed that the bias term takes the form

\begin{equation}
\label{eq:bf_def_exp}
B(\hat{f}) \;=\; 
\sup_{\|\delta_c\| \le \epsilon}
\mathbb{E}_{\substack{c^+, c^- \sim \rho^2 \\ c^+ \ne c^-}}
\mathbb{E}_{x\sim \mathcal{D}_{c^+}}
\left[
\ell'\!\left( f(x)^\top(\mu_{c^+}' - \mu_{c^-}') \right)
\cdot f(x)^\top(\delta_{c^+} - \delta_{c^-})
\right].
\end{equation}
The inner product term 
\[
f(x)^\top(\delta_{c^+} - \delta_{c^-})
\]
quantifies how differences in the class means directly influence the $B(f)$ and our generalization bound. Therefore, the shift vector $\delta_c$ plays a central role in determining the transferability of $f$. 

To empirically probe this effect, our experiments estimate the shift vector as introduced in section \ref{Sub 4.1} by measuring the displacement between class means in the pretraining distribution and the downstream distribution:
\[
\|\delta_c\| 
\;\approx\;
\left\|\mu^{\mathrm{pre}}_c - \mu^{\mathrm{down}}_c\right\|_2,
\]
so that the empirical mean-shift quantity we measure is
\begin{equation}
\widehat{\Delta}(f)
    \;=\;
    \frac{1}{C}\sum_{c=1}^C 
    \bigl\|
        \widehat{\mu}^{\mathrm{pre}}_c
        - 
        \widehat{\mu}^{\mathrm{down}}_c
    \bigr\|_2,
\end{equation}
which provides a data-driven proxy for the size of the perturbations $\delta_c$ appearing in $B(\hat f)$ (Eq.~\ref{eq:bf_def_exp}).  
Thus, by examining the relationship between $\widehat{\Delta}(f)$ and downstream accuracy, our experiments directly evaluate how the structure of $B(\hat f)$ predicts transfer performance.

\paragraph{Mean classifier.}
As defined in Section \ref{Sec 2.2}, the downstream linear classifier $W^{\mu}$ assigns to each class $c$ the weight vector
\[
W^{\mu}_c = \mu_c^{\mathrm{down}}
    =\mathbb{E}_{x\sim \mathcal{D}^{\mathrm{down}}_c}f(x),
\]
and predicts with
\[
W^{\mu}(x)
= \arg\max_{c\in[C]} 
\langle f(x),\, W^{\mu}_c\rangle.
\]
Since $W^{\mu}_c$ depends directly on downstream class means, this classifier is the natural empirical proxy for evaluating the effect of mean perturbations inside $B(\hat f)$.

\subsection{Scenario 1: Known Pretraining Distribution (Oracle Access)}

In the first experiment, we assume that the encoder $f$ was contrastively pretrained on CIFAR-10 and we \emph{have access to the pretraining samples themselves}. This assumption may not hold in real-world foundation-model settings, but it provides a controlled environment for isolating the geometric effect of distribution shift.

To this end, we consider the self-supervised encoder of ResNet-34 trained contrastively on CIFAR-10 and compute the empirical pretraining class means $\widehat{\mu}^{\mathrm{pre}}_c$. On the other hand, for each of the 15 corruption types in CIFAR-10-C and each severity level $s\in\{1,\dots,5\}$, we embed the corrupted images and compute the downstream means $\widehat{\mu}^{\mathrm{down}}_c(s)$. We compute the mean shift $\widehat{\Delta}(f,s)$ and evaluate the oracle mean classifier $W^{\mu^{\mathrm{pre}}}$ on the downstream test split.

 \begin{figure}
  \centering
  \subfloat[]{\includegraphics[width=0.4\textwidth]{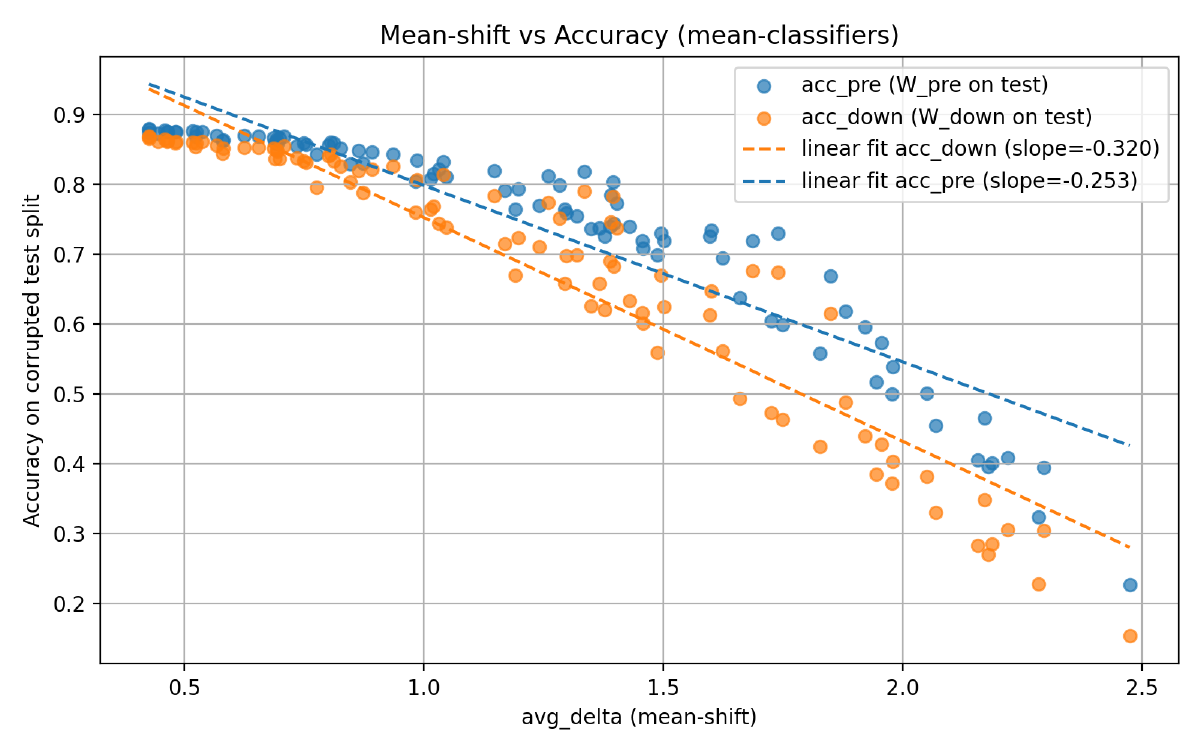}\label{fig:1}}
  \centering
  \subfloat[]{\includegraphics[width=0.4\textwidth]{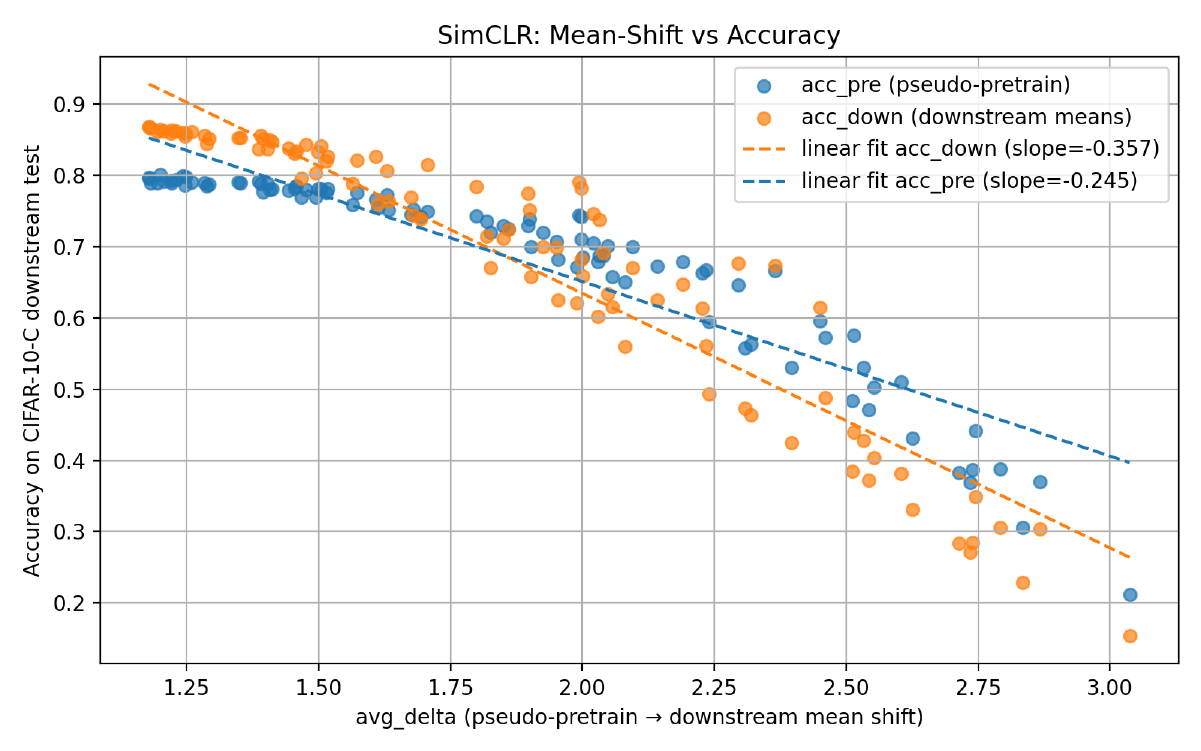}\label{fig:2}}
  \vspace{-5pt}
  \caption{\small
Downstream accuracy of CIFAR-10-C versus mean shift for the two experimental scenarios. 
\textbf{(a)} With oracle access to CIFAR-10 pretraining means (contrastively trained ResNet-34), downstream accuracy on CIFAR-10-C decreases smoothly as the mean shift grows, reflecting the effect predicted by the bias term $B(\hat f)$. 
\textbf{(b)} When the pretraining distribution is unknown, pseudo-class means recovered via $K$-means exhibit the same relationship: As shift vector $\delta$ grows, downstream accuracy drops. This validates the predictive role of mean-shift in $B(\hat f)$ under both supervised and unsupervised scenarios.
}

\label{fig:result}
\end{figure}

As illustrated in Fig.~\ref{fig:1}, across all corruptions and severities, we observe a clear and approximately linear \emph{negative correlation} between mean shift and accuracy.  
Higher severities induce larger feature-space displacements between $\widehat{\mu}^{\mathrm{pre}}_c$ and $\widehat{\mu}^{\mathrm{down}}_c(s)$, and the performance of $W^{\mu^{\mathrm{pre}}}$ drops proportionally.  
Hence, $\Delta(f)$ is one of the principal contributors to the bias term $B(f)$. These results provide direct empirical evidence that $B(f)$ correctly captures the transfer difficulty predicted by our theory.


\subsection{Scenario 2: Unknown Pretraining Distribution (Unsupervised Mean Recovery)}

The second experiment reflects the realistic situation in which the encoder $f$ is obtained through self-supervised pretraining on an unknown and unavailable dataset.  
In this case, neither $D^{\mathrm{pre}}$ nor its class means are accessible.  
We therefore estimate class structure by clustering.

Here, the encoder is a SimCLR~\cite{chen2020simple} trained in a fully contrastive manner without labels. We do not assume that CIFAR-10 was part of its pretraining data.  We embed CIFAR-10 (without using its labels) and perform $K$-means with $K=10$ to obtain pseudo-classes. For each cluster $k$, we compute a pseudo-mean $\widehat{\mu}^{\mathrm{pre}}_k = \frac{1}{|\mathcal{C}_k|}\sum_{x\in \mathcal{C}_k} f(x).$ We apply the Hungarian algorithm to find the optimal cluster-to-class matching to align cluster indices with downstream classes. For each corruption type and severity, we compute the downstream means and the corresponding mean shift $\widehat{\Delta}_{\mathrm{unsup}}(f,s)$. We evaluate the pseudo-mean classifier $W^{\hat{\mu}}$ on the downstream test dataset.

Even though pretraining labels and pretraining data are unavailable, the clustering-based pseudo-means recover sufficient structure to reproduce the same qualitative relationship observed in Scenario~1: \emph{larger mean shifts lead to lower downstream accuracy}.  
Thus, the dominant component of $B(f)$---the geometric mismatch between pretraining and downstream class means---remains predictive even when estimated via unsupervised pseudo-labels.

These empirical results (Fig.~\ref{fig:result}) support our theoretical claim that $B(f)$ captures the intrinsic transfer difficulty of a representation, independent of how or where the encoder was trained. 

Both experimental scenarios reveal a consistent phenomenon: (i) Mean shift, the key quantity inside $B(f)$, tightly predicts downstream accuracy across corruption types and severities. (ii) When pretraining data are known, the measured $\widehat{\Delta}(f)$ precisely tracks the degradation predicted by our bounds. (iii) When pretraining data are unknown, a clustering-based approximation of class means still yields the same monotonic relationship, demonstrating robustness of the theory.

While our primary analysis focuses on controlled shifts using CIFAR-10-C, we further validate our framework on the PACS benchmark, a standard dataset for real-world domain generalization. In Appendix \ref{sec:appendix_pacs}, we demonstrate that our theoretical bounds and the shift magnitude $\delta$ accurately predict performance drops across diverse domains (Photo, Art Painting, Cartoon, Sketch), confirming that our findings extend beyond synthetic corruptions.

  \vspace{-5pt}
\section{Conclusion and future works}
  \vspace{-5pt}
In this paper, we introduced a theoretical framework that extends the latent class model of contrastive learning to incorporate both distributional shift and domain generalization, two central challenges in real-world transfer settings. Our analysis reveals specific bias and variance terms in the downstream supervised loss, enabling us to clearly characterize how representational misalignment affects generalization. We also demonstrated that the bias can be bounded under realistic conditions related to the encoder and the loss function, which in turn provides provable guarantees for a wide range of contrastive methods. Beyond advancing theoretical understanding, our framework unifies prior analyses of contrastive learning with more general transfer scenarios, covering settings where downstream tasks draw from shifted or expanded label spaces. We expect this approach to influence the design of more robust pretraining objectives and inform empirical evaluations of representation transferability. Future research may explore extensions to other self-supervised paradigms, as well as investigating tighter relations between theoretical bounds and practical performance.

\bibliography{neurips_2025.bib}
\bibliographystyle{neurips_2025}

\newpage
\appendix
\section{ Generalization Bound}~\label{sec:app:generalization}

We first state the following general Lemma in order to bound the generalization error of the function class $\F$ on the unsupervised loss function $L_{un}(\cdot)$ lemma \ref{Lemma 4.2} can be directly derived from it.

\begin{lemma}
\label{Lemma A.1}
Let $l : \R^k \to \R$ be $\eta$-Lipschitz and bounded by $B$. Then with probability at least $1 - \delta$ over the training set $\sS = \{(x_j , x_j^+, x_{j1}^-, \dots, x_{jk}^-)\}_{j=1}^M$, for all $f \in \F$
\begin{equation}
L_{un}(\hat{f}) \le L_{un}(f) + O\left(\eta R\sqrt{k}\frac{\mathcal{R_S}(\F)}{M} + B\sqrt{\frac{\log \frac{1}{\delta}}{M}}\right)
\end{equation}
where
\begin{equation}
\mathcal{R_S}(\F) = \E_{\sigma\sim\{\pm1\}^{(k+2)dM}} \left[\sup_{f\in\F}\langle\sigma, f_{|\sS} \rangle\right]
\end{equation}
and $f_{|\sS} = (f_t(x_j), f_t(x_j^+), f_t(x_{j1}^-), \dots, f_t(x_{jk}^-))_{j\in[M], t\in[d]}$.
\end{lemma}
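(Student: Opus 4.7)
The plan is to prove this as a standard Rademacher-based uniform convergence result, where the genuine work lies in passing from the Rademacher complexity of the loss-composed class down to the ``linear'' Rademacher complexity $\mathcal{R}_S(\mathcal{F})$ that appears in the statement. First I would define the composite class $\mathcal{G} = \{g_f : f \in \mathcal{F}\}$ by $g_f(x, x^+, x_1^-, \dots, x_k^-) := \ell(\{f(x)^\top(f(x^+) - f(x_i^-))\}_{i=1}^k)$, which takes values in $[0, B]$. A standard McDiarmid-plus-symmetrization argument then gives, with probability at least $1 - \delta$, $\sup_{f \in \mathcal{F}} (L_{un}(f) - \hat{L}_{un}(f)) \le 2\,\mathbb{E}_S[\mathcal{R}_M(\mathcal{G})] + B\sqrt{\log(1/\delta)/(2M)}$, and combining the two-sided version with the empirical optimality $\hat{L}_{un}(\hat f) \le \hat{L}_{un}(f)$ of the ERM yields the excess-risk bound $L_{un}(\hat f) \le L_{un}(f) + 4\,\mathbb{E}_S[\mathcal{R}_M(\mathcal{G})] + O(B\sqrt{\log(1/\delta)/M})$ simultaneously for every $f \in \mathcal{F}$.

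The next step is to reduce $\mathcal{R}_M(\mathcal{G})$ to $\mathcal{R}_S(\mathcal{F})$. Since $\ell$ is $\eta$-Lipschitz, I would invoke Maurer's vector-contraction inequality to strip $\ell$ off the composition, obtaining $\mathcal{R}_M(\mathcal{G}) \le \sqrt{2}\,\eta\,\mathcal{R}_M(\mathcal{V})$, where $\mathcal{V}$ is the vector-valued class with $(j,i)$-th coordinate $v_{j,i}(f) := f(x_j)^\top(f(x_j^+) - f(x_{ji}^-))$, and a $\sqrt{k}$ factor is absorbed when converting the coordinate-wise Lipschitzness of $\ell$ into its $\ell_2$ Lipschitz constant across $k$ arguments. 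To reduce $\mathcal{R}_M(\mathcal{V})$ further, I would split the Rademacher sum $\sum_{j,i} \sigma_{j,i} v_{j,i}(f)$ into positive-pair and negative-pair contributions, each of which is a sum of \emph{bilinear} forms $f(x_a)^\top f(x_b)$. For each such sum I would freeze one encoder evaluation using $\|f(x_a)\| \le R$ and apply Cauchy--Schwarz, so that the remaining Rademacher average is linear in $f$ and can be absorbed into $R \cdot \mathcal{R}_S(\mathcal{F})$ after an appropriate coordinate-wise symmetrization indexed by the $(k+2)dM$ components of $f_{|S}$.

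Assembling the three estimates produces $\mathbb{E}_S[\mathcal{R}_M(\mathcal{G})] = O(\eta R \sqrt{k}\,\mathcal{R}_S(\mathcal{F})/M)$, which, inserted into the uniform-convergence bound above, gives the claimed inequality. The main obstacle is the bilinear peeling: because $v_{j,i}(f)$ is bilinear rather than linear in $f$, scalar Talagrand contraction does not apply, and Maurer's inequality only takes us down to a Rademacher average of \emph{inner products}, not of $f$ itself. Achieving the correct $\sqrt{k}$ scaling (rather than $k$) requires careful bookkeeping of the Rademacher signs across the $k$ negatives while matching the precise form of $\mathcal{R}_S(\mathcal{F})$ specified in the statement; a naive per-negative bound would either introduce an extra $\sqrt{k}$ factor or fail to produce the exact sign vector $\sigma \in \{\pm 1\}^{(k+2)dM}$ appearing there.
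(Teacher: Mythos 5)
Your high-level architecture matches the paper's: uniform convergence via a Rademacher bound, then Maurer's vector-contraction inequality to remove the loss. However, there is a genuine gap at the ``bilinear peeling'' step, and you have correctly sensed it but proposed a fix that does not work. The idea of freezing $f(x_j)$ by $\|f(x_j)\|\le R$ and applying Cauchy--Schwarz to reduce $\sup_f \sum_{j,i}\sigma_{j,i}\,f(x_j)^\top\bigl(f(x_j^+)-f(x_{ji}^-)\bigr)$ to something like $R\cdot\mathcal{R}_S(\mathcal{F})$ fails because both factors of the bilinear form depend on the \emph{same} $f$ inside the supremum; applying Cauchy--Schwarz inside the Rademacher expectation either destroys the sign structure (leaving you with an absolute value or a norm, not a Rademacher average linear in $f$) or requires decoupling the two arguments, which a crude union over $f$ cannot do. This is precisely the reason a naive Talagrand-style contraction argument cannot pass from ``inner products of $f$'' to ``coordinates of $f$.''

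The paper sidesteps this by never peeling $\ell$ off separately from the bilinear structure. Instead it defines a single vector-valued class $\tilde{\mathcal{F}} = \{\tilde f(z) = (f(x), f(x^+), f(x_1^-), \dots, f(x_k^-))\} \subset \mathbb{R}^{(k+2)d}$ and writes the per-sample loss as $g_f = \tfrac{1}{B}\,\ell\circ\phi\circ\tilde f$, where $\phi:\mathbb{R}^{(k+2)d}\to\mathbb{R}^k$ is the bilinear map $\phi(v,v^+,v_1^-,\dots,v_k^-)=(v^\top(v^+-v_i^-))_{i=1}^k$. It then applies Maurer's inequality \emph{once}, taking $h=\tfrac{1}{B}\,\ell\circ\phi$, whose Lipschitz constant on the domain where all blocks lie in the ball of radius $R$ is controlled by directly bounding the Frobenius norm of the Jacobian of $\phi$: $\|J\|_F = \sqrt{\sum_{i,t}(v_t^+-v_{ti}^-)^2 + 2k\sum_t v_t^2} \le \sqrt{6k}\,R$. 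This is the step that simultaneously produces the $\sqrt{k}$ scaling and the $R$ factor while landing on a class of functions linear in the coordinates of $\tilde f$, exactly matching the $(k+2)dM$-dimensional sign vector in the definition of $\mathcal{R}_S(\mathcal{F})$. Without this Jacobian bound on $\phi$ restricted to the $R$-ball, your plan cannot be completed; with it, your proposed two-stage decomposition into $\mathcal{V}$ is unnecessary.
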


Note that for $k+1$-way classification, for hinge loss we have $\eta = 1$ and $B = O(R^2)$, while for logistic loss $\eta=1$ and $B = O(R^2 + \log k)$ . Setting $k=1$, we get Lemma \ref{lemma 4.2}. We now prove \ref{Lemma A.1}.

\begin{proof}[Proof of \ref{Lemma A.1}.]
First, we use the classical bound for the generalization error in terms of the Rademacher complexity (see \citep{Mohri2018} Theorem 3.1). For a real function class $\mathcal{G}$ whose functions map from a set $\Z$ to $[0, 1]$ and for any $\delta > 0$, if $\sS$ is a training set composed by $M$ iid samples $\{z_j\}_{j=1}^M$, then with probability at least $1-\frac{\delta}{2}$, for all $g \in \mathcal{G}$
\begin{equation}
\E[g(z)] \le \frac{1}{M}\sum_{j=1}^M g(z_i) + 2\frac{\mathcal{R_S}(\mathcal{G})}{M} + 3\sqrt{\frac{\log \frac{4}{\delta}}{2M}}
\end{equation}
where $\mathcal{R_S}(\mathcal{G})$ is the usual Rademacher complexity. We apply this bound to our case by setting $\Z = \X^{k+2}$, $\sS$ is our training set and the function class is
\begin{equation}
\mathcal{G} = \left\{ g_f(x, x^+, x_1^-, \dots, x_k^-) = \frac{1}{B}\ell\left(\{f(x)^T(f(x^+) - f(x_i^-))\}_{i=1}^k\right) \;\middle|\; f \in \F \right\}
\end{equation}
We will show that for some universal constant c, $\mathcal{R_S}(\mathcal{G}) \le \frac{c\eta R\sqrt{k}}{B} \mathcal{R_S}(\F)$ or equivalently
\begin{equation}
\label{eq:16}
\E_{\sigma\sim\{\pm1\}^M}\left[\sup_{f\in\F} \langle\sigma, (g_f)_{|\sS}\rangle \right] \le \frac{c\eta R\sqrt{k}}{B} \E_{\sigma\sim\{\pm1\}^{d(k+2)M}}\left[\sup_{f\in\F} \langle\sigma, f_{|\sS}\rangle\right]
\end{equation}
where $(g_f)_{|\sS} = \{g_f(x_j,x_j^+,x_{j1}^-,...,x_{jk}^-)\}_{j=1}^M$.
To do that we will use the following vector-contraction inequality.
\begin{theorem}[(Corollary 4 in \citep{Maurer2016})]
\label{Theorem A.2}
Let $\Z$ be any set, and $\sS = \{z_j\}_{j=1}^M \in \Z^M$. Let $\tilde{\F}$ be a class of functions $\tilde{f} : \Z \to \R^n$ and $h : \R^n \to \R$ be $L$-Lipschitz. For all $\tilde{f} \in \tilde{\F}$, let $g_{\tilde{f}} = h \circ \tilde{f}$. Then
\[
\E_{\sigma\sim\{\pm1\}^M}\left[\sup_{\tilde{f}\in\tilde{\F}} \langle\sigma, (g_{\tilde{f}})_{|\sS}\rangle \right] \le \sqrt{2}L \E_{\sigma\sim\{\pm1\}^{nM}}\left[\sup_{\tilde{f}\in\tilde{\F}} \langle\sigma, \tilde{f}_{|\sS}\rangle\right]
\]
where $\tilde{f}_{|\sS} = (\tilde{f}_t(z_j))_{t\in[n],j\in[M]}$.
\end{theorem}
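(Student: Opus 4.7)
The plan is to prove the vector-contraction inequality via a comparison of Gaussian processes, which is the standard route to such results. I would first lift the Rademacher complexity on the left-hand side to a Gaussian complexity via the standard domination $\mathbb{E}|g|=\sqrt{2/\pi}$ for $g\sim\mathcal{N}(0,1)$, then invoke a Slepian--Sudakov--Fernique comparison adapted to the Lipschitz structure of $h$, and finally translate back to Rademacher form, tracking the absolute constants carefully so that the explicit factor $\sqrt{2}$ is recovered.

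Concretely, I would introduce i.i.d.\ standard Gaussians $g_1,\dots,g_M$ and $g'_{j,t}$ for $j\in[M]$, $t\in[n]$, and consider the two centered Gaussian processes indexed by $\tilde f \in \tilde{\mathcal{F}}$:
$$X_{\tilde f} \;:=\; \sum_{j=1}^M g_j\, h(\tilde f(z_j)), \qquad Y_{\tilde f} \;:=\; L\sum_{j=1}^M\sum_{t=1}^n g'_{j,t}\, \tilde f_t(z_j).$$
The $L$-Lipschitz hypothesis on $h$ yields the canonical pseudo-distance bound
$$\mathbb{E}(X_{\tilde f} - X_{\tilde f'})^2 \;\le\; L^2 \sum_{j=1}^M \|\tilde f(z_j) - \tilde f'(z_j)\|^2 \;=\; \mathbb{E}(Y_{\tilde f} - Y_{\tilde f'})^2,$$
so Slepian's inequality (in its centered Gaussian form) gives $\mathbb{E}\sup_{\tilde f} X_{\tilde f} \le \mathbb{E}\sup_{\tilde f} Y_{\tilde f}$. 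This is the core contraction step: it transfers the Lipschitz property of $h$ into a pure linear statement about $\tilde f$.

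The final stage is to relate both sides back to the Rademacher complexities in the theorem. Gaussian and Rademacher complexities are comparable up to constants involving $\sqrt{\pi/2}$ and $\sqrt{2/\pi}$, and a careful pairing of the two conversions absorbs the constants into the claimed $\sqrt{2}$. The main obstacle is precisely this constant tracking: a naive Rademacher $\to$ Gaussian $\to$ Rademacher round trip yields only $\sqrt{\pi/2}$, which is worse than $\sqrt{2}$. Maurer's sharper constant comes from circumventing the round trip and instead running an inductive coupling argument directly on Rademacher variables, exploiting the Lipschitz property one sample at a time by conditioning on all but one coordinate and using a single-variable contraction lemma. Carrying out that induction --- in particular, verifying that each swap of a scalar Rademacher for its vector-coordinate counterpart costs only a $\sqrt{2}$ factor, and that these costs do not compound across the $M$ samples --- is the most delicate step and the place where I expect to spend the bulk of the effort.
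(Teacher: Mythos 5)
The paper does not prove this statement; it cites it directly as Corollary~4 of \citet{Maurer2016} and uses it as a black box inside the proof of Lemma~\ref{Lemma A.1}. So there is no in-paper argument to compare against, and I can only assess your sketch on its own terms. Your diagnosis of the Gaussian route is slightly off: the one-way comparison $\mathcal{R}\le\sqrt{\pi/2}\,\mathcal{G}$ holds, and Sudakov--Fernique does give $\mathcal{G}(h\circ\tilde{\F})\le L\,\mathcal{G}(\tilde{\F})$ from your increment computation, so you legitimately obtain $\mathcal{R}(h\circ\tilde{\F})\le\sqrt{\pi/2}\,L\,\mathcal{G}(\tilde{\F})$. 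But the theorem has a \emph{Rademacher} complexity on the right, and the reverse conversion $\mathcal{G}\lesssim\mathcal{R}$ costs a $\sqrt{\log(\cdot)}$ factor in general, not a universal constant. So the naive round trip does not ``yield $\sqrt{\pi/2}$''; it yields no constant-factor Rademacher bound at all. (Also note $\sqrt{\pi/2}<\sqrt{2}$, so if it were achievable it would not be ``worse.'')

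The genuine gap is that your final paragraph names Maurer's strategy --- iterate over the $M$ samples, conditioning on all Rademacher signs but one, and apply a one-sample scalar-to-vector contraction lemma each time --- but never states or proves that lemma, and it is the entire content of the theorem. Concretely, what must be established is: for any $T\subset\R\times\R^n$ and any $L$-Lipschitz $h:\R^n\to\R$,
\[
\E_{\sigma\in\{\pm1\}}\Bigl[\sup_{(a,b)\in T}\bigl(a+\sigma\,h(b)\bigr)\Bigr]\;\le\;\E_{\gamma\in\{\pm1\}^n}\Bigl[\sup_{(a,b)\in T}\bigl(a+\sqrt{2}\,L\,\langle\gamma,b\rangle\bigr)\Bigr],
\]
after which the full claim follows by applying this to each $j\in[M]$ in turn, absorbing already-contracted coordinates into $a$. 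Proving the one-sample lemma requires expanding the two-point Rademacher average on the left, choosing near-maximizers $(a_1,b_1),(a_2,b_2)$ for $\sigma=+1$ and $\sigma=-1$, bounding $|h(b_1)-h(b_2)|\le L\|b_1-b_2\|$, and then recovering $\|b_1-b_2\|$ from the right-hand side via the Khintchine lower bound $\E_\gamma|\langle\gamma,v\rangle|\ge\|v\|/\sqrt{2}$, which is exactly where the $\sqrt{2}$ appears. Without that argument, what you have is an outline of Maurer's proof rather than a proof.
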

We apply Theorem \ref{Theorem A.2} to our case by setting $\Z = \X^{k+2}$, $n = d(k+2)$ and
\[
\tilde{\F} = \{\tilde{f}(x, x^+, x_{j1}^-, \dots, x_{jk}^-) = (f(x), f(x^+), f(x_{j1}^-), \dots, f(x_{jk}^-))|f \in \F\}
\]
We also use $g_{\tilde{f}}=g_f$ where $\tilde{f}$ is derived from $f$ in the definition of $\tilde{F}$. Observe that now \ref{Theorem A.2} is exactly in the form of \ref{eq:16} and we need to show that $L \le \frac{c\sqrt{2}\eta R\sqrt{k}}{B}$ for some constant $c$. But, for $z = (x, x^+, x_1^-, \dots, x_k^-)$, we have $g_{\tilde{f}}(z) = \frac{1}{B}\ell(\phi(\tilde{f}(z)))$ where $\phi : \R^{(k+2)d} \to \R^k$ and $\phi((v_t, v_t^+, v_{t1}^-, \dots, v_{tk}^-)_{t\in[d]}) = (\sum_t v_t(v_t^+ - v_{ti}^-))_{i\in[k]}$. Thus, we may use $h = \frac{1}{B} l \circ \phi$ to apply Theorem A.2.

Now, we see that $\phi$ is $\sqrt{6k}R$-Lipschitz when $\sum_t v_t^2, \sum_t(v_t^+)^2, \sum_t(v_{tj}^-)^2 \le R^2$ by computing its Jacobian. Indeed, for all $i, j \in [k]$ and $t \in [d]$, we have $\frac{\partial\phi_i}{\partial v_t} = v_t^+ - v_{ti}^-, \frac{\partial\phi_i}{\partial v_t^+} = v_t$ and $\frac{\partial\phi_i}{\partial v_{tj}^-} = -v_t \mathds{1}\{i = j\}$. From the triangle inequality, the Frobenius norm of the Jacobian $J$ of $\phi$ is
\[
||J||_F = \sqrt{\sum_{i,t}(v_t^+ - v_{ti}^-)^2 + 2k\sum_t v_t^2} \le \sqrt{4kR^2 + 2kR^2} = \sqrt{6k}R
\]
Now, taking into account that $||J||_2 \le ||J||_F$, we have that $\phi$ is $\sqrt{6k}R$-Lipschitz on its domain and since $l$ is $\eta$-Lipschitz, we have $L \le \frac{\sqrt{6}\eta R\sqrt{k}}{B}$.

Now, we have that with probability at least $1-\frac{\delta}{2}$
\begin{equation}
\label{eq:17}
L_{un}(\hat{f}) \le \hat{L}_{un}(\hat{f}) + O\left(\frac{\eta R\sqrt{k}\mathcal{R_S}(\F)}{M} + B\sqrt{\frac{\log \frac{1}{\delta}}{M}}\right)
\end{equation}
Let $f^* \in \arg\min_{f\in\F} L_{un}(f)$. With probability at least $1-\frac{\delta}{2}$, we have that $\hat{L}_{un}(f^*) \le L_{un}(f^*) + 3B\sqrt{\frac{\log\frac{2}{\delta}}{2M}}$ (Hoeffding’s inequality). Combining this with Equation \ref{eq:17}, the fact that $\hat{L}_{un}(\hat{f}) \le \hat{L}_{un}(f^*)$ and applying a union bound, finishes the proof.
\end{proof}

\subsection*{A.2. Class Collision Lemma}
\label{sec:app:collision}

We prove a general Lemma, from which Lemma \ref{Lemma 4.4} can be derived directly.

\begin{lemma}
\label{Lemma A.3}
Let $c \in \mathcal{C}$ and $\ell : \mathbb{R}^t \rightarrow \mathbb{R}$ be either the t-way hinge loss or t-way logistic loss. Let $x, x^+, x_1^-, \dots, x_t^-$ be iid draws from $\mathcal{D}_c$. For all $f \in \mathcal{F}$, let
$$ L_{un,c}^{=}(f) = \E_{x,x^+,x_i^-} \left[ \ell\left( \left\{ f(x)^T(f(x^+) - f(x_i^-)) \right\}_{i=1}^t \right) \right] $$
Then
\begin{equation}
\label{eq:18}
L_{un,c}^{=}(f) - \ell(\vec{0}) \le c_0 t \sqrt{\|\Sigma(f, c)\|_2} \E_{x\sim\mathcal{D}_c} [\|f(x)\|]
\end{equation}

where $c_0$ is a positive constant.
\end{lemma}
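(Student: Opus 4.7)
The key observation is that when $x^+$ and $x_i^-$ are drawn independently from the same class distribution $\mathcal{D}_c$, each scalar $v_i := f(x)^T(f(x^+) - f(x_i^-))$ is mean-zero conditional on $x$, and its conditional variance is controlled by the top eigenvalue of the class covariance. The plan is therefore to pass from the loss difference to a sum of $\E[|v_i|]$'s via Lipschitz continuity of $\ell$, and then bound each $\E[|v_i|]$ by the square root of its second moment.

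The first step is to verify that both the $t$-way hinge loss $\ell(v) = \max\{0,\, 1 + \max_i(-v_i)\}$ and the $t$-way logistic loss $\ell(v) = \log\!\big(1 + \sum_i e^{-v_i}\big)$ are $1$-Lipschitz with respect to the $\ell_\infty$ norm. The hinge case follows from the $1$-Lipschitzness of $\max$; the logistic case follows from the direct bound $\|\nabla \ell(v)\|_1 = \sum_i e^{-v_i}/(1+\sum_j e^{-v_j}) \le 1$, whose dual is exactly $\ell_\infty$-Lipschitzness with constant $1$. This yields
\[
\ell(v) - \ell(\vec 0) \;\le\; \|v\|_\infty \;\le\; \|v\|_1 \;=\; \sum_{i=1}^t |v_i|.
\]

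Next I would take expectations and exploit the symmetry of the $x_i^-$'s to reduce to a single quantity $\E[|v_1|]$. Conditioning on $x$, since $f(x^+)$ and $f(x_1^-)$ are iid with mean $\mu_c$ and covariance $\Sigma(f,c)$, one computes $\E[v_1 \mid x] = 0$ and
\[
\E[v_1^2 \mid x] \;=\; f(x)^T\,(2\,\Sigma(f,c))\,f(x) \;\le\; 2\,\|\Sigma(f,c)\|_2\,\|f(x)\|^2.
\]
Jensen's inequality $\E[|v_1|\mid x] \le \sqrt{\E[v_1^2\mid x]}$ followed by an outer expectation over $x$ gives
\[
\E|v_1| \;\le\; \sqrt{2\,\|\Sigma(f,c)\|_2}\;\E_{x\sim\mathcal{D}_c}\|f(x)\|,
\]
and multiplying by $t$ yields the desired inequality with $c_0 = \sqrt{2}$.

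The only mildly delicate point is verifying the $\ell_\infty$-Lipschitz constants in a unified way across the two loss families; beyond that, the argument is elementary (Jensen plus a direct second-moment computation using independence of $x^+$ and $x_1^-$). The bound $\|v\|_\infty \le \|v\|_1$ is loose by a logarithmic factor compared to a chaining-style estimate, but since Lemma~A.3 only targets a linear $t$-dependence, this simple route suffices and avoids any concentration machinery.
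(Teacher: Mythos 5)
Your proof is correct and follows essentially the same route as the paper: reduce the loss difference to a bound on $\sum_i \E[|v_i|]$, then control $\E[|v_1|]$ by Jensen and a direct second-moment computation using independence of $f(x^+)$ and $f(x_1^-)$ conditional on $x$, giving exactly the $\sqrt{2\|\Sigma(f,c)\|_2}\,\E\|f(x)\|$ factor. The only difference is packaging: the paper handles the hinge and logistic cases by separate elementary calculations (showing $L_{un,c}^{=}(f) - \ell(\vec 0) \le c_0\,\E[|\max_i z_i|]$ in each case), whereas you appeal to a unified $\ell_\infty$-Lipschitzness of $\ell$, which is a slightly cleaner way to arrive at the same $\|v\|_1$ bound.
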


Lemma \ref{Lemma 4.4} is a direct consequence of the above Lemma, by setting $t = 1$ (which makes $\ell(0) = 1$), taking an expectation over $c \sim \nu$ in Equation \ref{eq:18} and noting that $\E_{c\sim\nu}[L_{un,c}^{=}(f)] = L_{un}^{=}(f)$.

\begin{proof}[Proof of Lemma \ref{Lemma A.3}]
Fix an $f \in \mathcal{F}$ and let $z_i = f(x)^T(f(x_i^-) - f(x^+))$ and $z = \max_{i\in[t]} z_i$. First, we show that
$$ L_{un,c}^{=}(f) - \ell(\vec{0}) \le c_0\E[|z|], \text{ for some constant } c_0. $$
Note that $\E[|z|] = P[z \ge 0]\E[z|z \ge 0] + P[z \le 0]\E[-z|z \le 0] \ge P[z \ge 0]\E[z|z \ge 0]$.
\paragraph{t-way hinge loss:} By definition $\ell(v) = \max\{0, 1+\max_{i\in[t]}\{-v_i\}\}$. Here, $L_{un,c}^{=}(f) = \E[(1+z)_+] \le \E[\max\{1+z, 1\}] = 1 + P[z \ge 0]\E[z|z \ge 0] \le 1 + \E[|z|]$.
\paragraph{t-way logistic loss:} By definition $\ell(v) = \log_2(1 + \sum_{i=1}^t e^{-v_i})$, we have $L_{un,c}^{=}(f) = \E[\log_2(1 + \sum_i e^{z_i})] \le \E[\log_2(1 + te^z)] \le \max\{\frac{z}{\log 2} + \log_2(1 + t), \log_2(1 + t)\} 
=\frac{P[z\ge0]\E[z|z\ge0]}{\log 2} + \log_2(1 + t) \le\frac{\E[|z|]}{\log 2} + \log_2(1 + t)$.

Finally, $\E[|z|] \le \E[\max_{i\in[t]}|z_i|] \le t\E[|z_1|]$. But,
$$ \E[|z_1|] = \E_{x,x^+,x_1^-} \left[ |f(x)^T(f(x_1^-) - f(x^+))| \right] \\ $$
$$\le \E_x\left[\|f(x)\| \sqrt{\E_{x^+,x_1^-}\left[\left( \frac{f(x)^T}{\|f(x)\|} (f(x_1^-) - f(x^+)) \right)^2\right]}\right] \le \sqrt{2}\sqrt{\|\Sigma(f, c)\|_2} \E_{x\sim\mathcal{D}_c}[\|f(x)\|] $$
\end{proof}

\section{Modified Proof under Distributional Shift}
\label{sec:app:Appendix MODIFIED}

We now present Theorem \ref{Theorem:B.1} as the formal statement of Theorem \ref{Theorem:5.1} and prove it.
First, we define some necessary quantities. Let $(c^+, c_1^-, \dots, c_k^-)$ be $k+1$ not necessarily distinct classes. We define $Q(c^+, c_1^-, \dots, c_k^-)$ to be the set of distinct classes in this tuple. We also define $I^+(c_1^-, \dots, c_k^-) = \{i \in [k] | c_i^- = c^+\}$ to be the set of indices where $c^+$ reappears in the negative samples . We will abuse notation and just write $Q, I^+$ when the tuple is clear from the context.

To define $L_{un}^{\ne}(f)$ consider the following tweak in the way the latent classes are sampled: sample $c^+, c_1^-, \dots, c_k^- \sim \rho^{k+1}$ conditioning on $|I^+| < k$ and then remove all $c_i^-, i \in I^+$ . The datapoints are then sampled as usual: $x, x^+ \sim \mathcal{D}_{c^+}^2$ and $x_i^- \sim \mathcal{D}_{c_i^-}, i \in [k]$, independently.
\[
L_{un}^{\ne}(f) := \E_{c^+,c_i^-, x,x^+,x_i^-} \left[ l\left(\{f(x)^T(f(x^+) - f(x_i^-))\}_{i \notin I^+}\right) \;\middle|\; |I^+| < k \right]
\]
which always contrasts points from different classes, since it only considers the negative samples that are not from $c^+$. The generalization error is:
\[
Gen_M = O\left(\frac{R\sqrt{k}\mathcal{R_S}(\F)}{M} + (R^2 + \log k)\sqrt{\frac{\log \frac{1}{\delta}}{M}}\right)
\]
where $\mathcal{R_S}(\F) = \E_{\sigma\sim\{\pm1\}^{(k+2)dM}} [\sup_{f\in\F} \langle\sigma, f_{|\mathcal{S}}\rangle]$, where $f_{|\mathcal{S}} = (f_t(x_j), f_t(x_j^+), f_t(x_{j1}^-), \dots, f_t(x_{jk}^-))_{j\in[M],t\in[d]}$ .

For $c^+, c_1^-, \dots, c_k^- \sim \rho^{k+1}$, let $\tau_k = \mathbb{P}[I^+ \ne \emptyset]$ and $\tau_0 = \mathbb{P}[c^+ = c_i^-, \forall i]$. Observe that $\tau_1$, as defined in Section \ref{Sub 4.1}, is $\mathbb{P}[c^+ = c_1^-]$. Let $p_{\max}(\T) = \max_c \D_\T(c)$ and 
\begin{equation}
    \rho_{\min}^+(\T) = \min_{c \in \T} \mathbb{P}_{c^+,c_i^- \sim\rho^{k+1}}[c^+ = c|Q = \T, I^+ = \emptyset]
\end{equation}
. In Theorem \ref{Theorem:B.1} we will upper bound the following quantity: $\E_{\T\sim\D} \left[\frac{\rho_{\min}^+(\T)}{p_{\max}(\T)}L_{sup}^\mu(\T, \hat{f})\right]$ ($\D$ was defined in Section \ref{Sub 5.1}).

\begin{theorem}
\label{Theorem:B.1}
Let $\hat{f} \in \arg\min_{f\in\F} \hat{L}_{un}(f)$. With probability at least $1 - \delta$, for all $f \in \F$
\[
\E_{\T\sim\D} \left[\frac{\rho_{\min}^+(\T)}{p_{\max}(\T)}L_{sup}^{\mu_{down}}(\T, \hat{f})\right] \le \frac{1-\tau_0}{1-\tau_k}L_{un}^{\ne}(f) + \frac{c_0k\tau_1}{1-\tau_k}s(f) + \frac{1}{1-\tau_k}Gen_M - \frac{1}{1-\tau_k}\,B(f)
\]
where $c_0$ is a constant.
\end{theorem}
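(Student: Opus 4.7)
The plan is to extend the proof of Lemma 4.3 and Theorem 4.5 to the $k$-negative setting by interleaving the class-collision decomposition of \citet{saunshi2019theoretical} with the distribution-shift Taylor correction that produced $B(f)$ in the one-negative case. Starting from
\[
L_{\text{un}}(f) = \E_{c^+, c_1^-, \dots, c_k^- \sim \rho^{k+1}}\; \E_{x, x^+, x_i^-}\!\left[\ell\!\left(\{f(x)^\top(f(x^+) - f(x_i^-))\}_{i=1}^k\right)\right],
\]
I would partition the outer expectation by the collision set $I^+$: the clean event $\{I^+ = \emptyset\}$ has mass $1-\tau_k$, the partial-collision event $\{\emptyset \ne I^+ \ne [k]\}$ has mass $\tau_k - \tau_0$, and the total-collision event $\{I^+ = [k]\}$ has mass $\tau_0$. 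Each piece is bounded separately and the results are rearranged into the stated inequality.

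On the clean event, I first apply Jensen to pass from $\E_{x^+, x_i^-}\ell(\cdot)$ to $\ell$ evaluated at class means, substitute $\mu_c = \mu'_c + \delta_c$, and Taylor-expand $\ell$ coordinate-wise around $\{f(x)^\top(\mu'_{c^+} - \mu'_{c_i^-})\}_i$; the zeroth-order term, after conditioning on the distinct-class set $Q = T$ and reweighting by the conditional law of $c^+$, produces $\rho_{\min}^+(T)\, L_{\text{sup}}^{\mu_{\text{down}}}(T, f)$, while the first-order term collapses (after taking the supremum over $\|\delta_c\| \le \epsilon$) to the $k$-negative analogue of $B(f)$. On the partial-collision events, I remove the colliding coordinates (whose Jensen image is zero) and control the resulting per-coordinate fluctuation via Lemma A.3 applied individually; since each of $k$ coordinates collides with probability $\tau_1$, summing these contributions yields the $c_0 k \tau_1 s(f)$ term. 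The total-collision event contributes a constant $\tau_0\, \ell(\vec{0})$, which combines with the clean-event mass to produce the coefficient $(1-\tau_0)/(1-\tau_k)$ in front of $L_{\text{un}}^{\ne}(f)$. Finally, I pass from $f$ to $\hat f$ using the $k$-negative generalization bound of Lemma A.1 (with $\eta = 1$ and $B = O(R^2 + \log k)$), which contributes the $\text{Gen}_M/(1-\tau_k)$ term after dividing through by $1-\tau_k$.

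The last step is a change of measure from per-class sampling under $\rho^{k+1} \mid I^+ = \emptyset$ to the task distribution $\mathcal{D}$: the probability that a given class $c \in T$ plays the role of $c^+$ under this conditional is at least $\rho_{\min}^+(T)$, while the downstream task distribution $\mathcal{D}_T$ can place mass up to $p_{\max}(T)$ on any label, so absorbing this mismatch produces the ratio $\rho_{\min}^+(T)/p_{\max}(T)$ on the left-hand side of the theorem.

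The main obstacle, as in the original multi-negative argument, is cleanly separating the Taylor-expanded shift correction from the class-collision correction: Jensen's inequality is valid only on coordinates $i \notin I^+$ where the inner argument is affine in the removable randomness, so on collided coordinates the nonlinear $\ell$ must be controlled by the covariance-based bound of Lemma A.3 rather than by Jensen. Tracking the coefficients $\tau_0, \tau_1, \tau_k$ through the resulting coordinate-wise decomposition, together with the task-reweighting factors, is the most delicate bookkeeping step; fortunately the bias term $B(f)$ survives unchanged because it lives entirely on the non-collided coordinates and therefore does not interact with the collision correction.
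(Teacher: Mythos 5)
Your proposal follows the same outline as the paper's Appendix~B proof: Jensen plus a first-order Taylor expansion to isolate the shift term $B(f)$, a conditioning argument over the collision set $I^+$ to reach a task-level supervised loss, Lemma~\ref{Lemma A.3} to control the class-collision contribution, Lemma~\ref{Lemma A.1} for generalization, and a change of measure producing the $\rho_{\min}^+(\T)/p_{\max}(\T)$ factor. So the ingredients are all present and the plan is structurally right.

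Two details in your account, however, are wrong in a way that matters if you actually carry the proof through. First, the claim that Jensen ``is valid only on coordinates $i \notin I^+$'' is incorrect: the inner argument $f(x)^\top(f(x^+)-f(x_i^-))$ is affine in the removable randomness for \emph{every} coordinate, so Jensen applies throughout; on collided coordinates it merely sends the argument to zero. Lemma~\ref{Lemma A.3} is needed not because Jensen fails there but because the proof requires two bounds in opposite directions: a Jensen/Taylor \emph{lower} bound on $L_{\mathrm{un}}(\hat f)$ (which produces the supervised loss, the $B(f)$ term, and a $\tau_k\,\E[\ell_{|I^+|}(\vec 0)\mid I^+\ne\emptyset]$ floor on collided coordinates), and a subadditivity \emph{upper} bound on $L_{\mathrm{un}}(f)$ (which leaves a $\tau_k\,\E[\ell(\{\cdot\}_{i\in I^+})\mid I^+\ne\emptyset]$ term that Lemma~\ref{Lemma A.3} must absorb). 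The collision correction $\Delta(f)$ is the \emph{difference} of these two collision terms once the generalization inequality connects the two sides. Your one-pass three-event partition elides this subtraction, which is exactly where the hard bookkeeping lives. Second, your mechanism for the $(1-\tau_0)$ coefficient (``the total-collision constant combines with the clean-event mass'') is not how it arises; it comes from $L_{\mathrm{un}}^{\ne}(f)$ being defined as a conditional expectation on $\{|I^+|<k\}$, so the unconditional expectation over non-collided coordinates equals $(1-\tau_0)$ times it. Likewise, the $k\tau_1$ factor requires the conditional computation $\E[|I^+|\mid c^+=c, I^+\ne\emptyset] = k\rho(c)/(1-(1-\rho(c))^k)$ followed by averaging against the auxiliary distribution $u(c)\propto\rho(c)(1-(1-\rho(c))^k)$, rather than the informal ``each coordinate collides with probability $\tau_1$'' argument---though your heuristic does land on the correct final coefficient.
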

We now prove the Theorem in 3 steps.
We denote by $\mu_c^{\mathrm{uns}} = \mathbb{E}_{x\sim D_c}[f(x)]$ the class mean under the unsupervised distribution, 
and by $\mu_c^{\mathrm{down}} = \mathbb{E}_{x\sim D_c'}[f(x)]$ the class mean under the downstream (shifted) distribution.
The shift is $\delta_c := \mu_c^{\mathrm{uns}} - \mu_c^{\mathrm{down}}$.
Thus, $\mu_c^{\mathrm{uns}} = \mu_c^{\mathrm{down}} + \delta_c$.
Also, Recall that the sampling procedure for unsupervised data is as follows : sample $c^+, c_1^-, ..., c_k^- \sim \rho^{k+1}$ and then $x, x^+ \sim D_{c^+}^2$ and $x_i^- \sim D_{c_i^-}, i \in [k]$.

\textbf{Step 1 (convexity):}
By convexity of $\ell$ and Jensen's inequality, we have

\begin{align}
L_{\mathrm{un}}(\hat f)
&= \E_{\substack{c^{+},c_{i}^{-}\sim\rho^{k+1} \\ x\sim\mathcal{D}_{c^{+}}}} 
\E_{\substack{x_{i}^{-}\mathcal{D}_{c_{i}^{-}} \\ x^{+}\sim\mathcal{D}_{c^{+}} \\ 
}}
\Big[\ell\big(\{\hat f(x)^\top(\hat f(x^+)- \hat f(x^-_i))\}_{i=1}^k\big)\Big]
\\
&\ge \E_{\substack{c^{+},c_{i}^{-}\sim\rho^{k+1} \\ x\sim\mathcal{D}_{c^{+}}}}\Big[\ell\big(\{\hat f(x)^\top(\mu_{c^+}^{\mathrm{uns}} - \mu_{c^-_i}^{\mathrm{uns}})\}_{i=1}^k\big)\Big] \\
&= \E_{\substack{c^{+},c_{i}^{-}\sim\rho^{k+1} \\ x\sim\mathcal{D}_{c^{+}}}}\Big[\ell\big(\{\hat f(x)^\top(\mu_{c^+}^{\mathrm{down}} - \mu_{c^-_i}^{\mathrm{down}})\}_{i=1}^k 
+ \hat f(x)^\top(\delta_{c^+}-\delta_{c^-_i})\}_{i=1}^k\big)\Big].
\end{align}

Applying a first-order Taylor expansion around the unshifted means yields
\begin{equation}
L_{\mathrm{un}}(\hat f)
\;\ge\; 
\E_{\substack{c^{+},c_{i}^{-}\sim\rho^{k+1} \\ x\sim\mathcal{D}_{c^{+}}}}\Big[
\ell\big(\{\hat f(x)^\top(\mu_{c^+}^{\mathrm{down}}-\mu_{c^-_i}^{\mathrm{down}})\}_{i=1}^k\big)
+ \langle \nabla \ell(\cdot), \{\hat f(x)^\top(\delta_{c^+}-\delta_{c^-_i})\}_{i=1}^k\rangle
\Big].
\end{equation}

Define the supremum of the shift-induced contribution:
\[
B(f) := \sup_{\delta_{c},\delta_c'}\;\Big|\,
\langle \nabla \ell(u), \hat f(x)^\top(\delta_{c}-\delta_{c'}) \rangle
\,\Big|\;\;\ge 0.
\]

\textbf{Step 2 (decomposing into supervised tasks)} We now decompose the above quantity to handle repeated classes.
\begin{align*}
&\E_{c^+,c_i^-\sim\rho^{k+1}, x\sim D_{c^+}} \left[\ell\left(\{\hat{f}(x)^T({\mu^{down}}_{c^+} - {\mu^{down}}_{c_i^-})\}_{i=1}^k\right)\right]
\\
&\ge (1 - \tau_k) \underset{c^+,c_i^-\sim\rho^{k+1}, x\sim\D_{c^+}}{\E}\left[\ell\left(\{\hat{f}(x)^T({\mu^{down}}_{c^+} - {\mu^{down}}_{c_i^-})\}_{i=1}^k\right) \;\middle|\; I^+ = \emptyset\right] 
\\
&+ \tau_k \underset{c^+,c_i^-\sim\rho^{k+1}}{\E}[\ell(\underbrace{0, \dots, 0}_{|I^+| \text{ times}}) | I^+ \ne \emptyset] + (1 - \tau_k) B(f) \\
&\ge (1 - \tau_k) \underset{c^+,c_i^-\sim\rho^{k+1}, x\sim\D_{c^+}}{\E}\left[\ell\left(\{\hat{f}(x)^T({\mu^{down}}_{c^+} - {\mu^{down}}_{c})\}_{c\in Q, c\ne c^+}\right) \;\middle|\; I^+ = \emptyset\right] \\
&+ \tau_k \underset{c^+,c_i^-\sim\rho^{k+1}}{\E}[\ell_{|I^+|}(\vec{0}) | I^+ \ne \emptyset] + (1 - \tau_k) B(f)
\tag{24}
\end{align*}

where $l_t(\vec{0}) = l(0, \dots, 0)$ ($t$ times).
Note that the term $\tau_k B(f)\E[l(0,...,0)|I^+ \ne \emptyset]$ will be lined out due to the fact that if classes have a collision, then B(f) will be zero as $\delta$ equals zero.

Recall that in the main paper, sampling $\T$ from $\D$ is defined as sampling the $(k+1)$-tuple from $\rho^{k+1}$ conditioned on $I^+ = \emptyset$ and setting $\T = Q$. Based on this definition, by the tower property of expectation, we have
\begin{align*}
&\underset{c^+,c_i^-\sim\rho^{k+1}, x\sim\D_{c^+}}{\E}\left[\ell\left(\{\hat{f}(x)^T({\mu^{down}}_{c^+} - {\mu^{down}}_{c})\}_{c\in Q, c\ne c^+}\right) \;\middle|\; I^+ = \emptyset\right] \nonumber \\
&= \underset{\T\sim\D}{\E}\underset{c^+,c_i^-\sim\rho^{k+1}, x\sim\D_{c^+}}{\E}\left[\ell\left(\{\hat{f}(x)^T({\mu^{down}}_{c^+} - {\mu^{down}}_{c})\}_{c\in Q, c\ne c^+}\right) \;\middle|\; Q = \T, I^+ = \emptyset\right] \nonumber \\
&= \underset{\T\sim\D}{\E}\underset{c^+\sim\rho^+(\T), x\sim\D_{c^+}}{\E}\left[\ell\left(\{\hat{f}(x)^T({\mu^{down}}_{c^+} - {\mu^{down}}_{c})\}_{c\in \T, c\ne c^+}\right)\right] \label{eq:symmetrize} \tag{25}
\end{align*}
where $\rho^+(\T)$ is the distribution of $c^+$ when $(c^+, c_1^-, \dots, c_k^-)$ are sampled from $\rho^{k+1}$ conditioned on $Q = \T$ and $I^+ = \emptyset$. Recall that $\rho_{\min}^+(\T)$ from the theorem’s statement is exactly the minimum out of these $|\T|$ probabilities. 
Now, to lower bound the last quantity with the LHS in the theorem statement, we just need to observe that for all tasks $\T$:
\begin{align*}
&\underset{c^+\sim\rho^+(\T), x\sim\D_{c^+}}{\E}\left[\ell\left(\{\hat{f}(x)^T({\mu^{down}}_{c^+} - {\mu^{down}}_{c})\}_{c\in \T, c\ne c^+}\right)\right] \nonumber \\
&\ge \frac{\rho_{\min}^+(\T)}{p_{\max}(\T)} \underset{c^+\sim\D_\T, x\sim\D_{c^+}}{\E}\left[\ell\left(\{\hat{f}(x)^T({\mu^{down}}_{c^+} - {\mu^{down}}_{c})\}_{c\in \T, c\ne c^+}\right)\right] \nonumber \\
&= \frac{\rho_{\min}^+(\T)}{p_{\max}(\T)} L_{sup}^{\mu_{down}}(\T, \hat{f}) \label{eq:lower_bound_sup} \tag{26}
\end{align*}
By combining Equations 23, 24 and 26 we get

    \[(1 - \tau_k) \underset{\T\sim\D}{\E}
    \left[\frac{\rho_{\min}^+(\T)}{p_{\max}(\T)}L_{sup}^{\mu_{down}}(\T, \hat{f})\right] 
    \]
    \\
    \[\le L_{un}(\hat{f}) - \tau_k \underset{c^+,c_i^-\sim\rho^{k+1}}{\E}[l_{|I^+|}(\vec{0}) | I^+ \ne \emptyset] -(1-\tau_k)B(f)
    \tag{27}\]

Now, by applying Lemma \ref{Lemma A.1}, we bound the generalization error: with probability at least $1 - \delta, \forall f \in \F$ 
\begin{equation} \label{eq:gen_err}
L_{un}(\hat{f}) \le L_{un}(f) + Gen_M
\tag{28}
\end{equation}

\textbf{Step 3 ($L_{un}$ decomposition)} Now, we decompose $L_{un}(f)$
\begin{align*}
L_{un}(f) &\le 
\underset{\underset{\underset{x,x^+\sim\D_{c^+}^2}{x_i^-\sim\D_{c_i^-}}}{c^+,c_i^-\sim\rho^{k+1}}}{\E}\left[ \ell(\{f(x)^T(f(x^+) - f(x_i^-))\}_{i \notin I^+}) + \ell(\{f(x)^T(f(x^+) - f(x_i^-))\}_{i \in I^+}) \right] \\
&=\underset{\underset{\underset{\underset{i\notin I^+}{x_i^-\sim\D_{c_i^-}}}{x,x^+\sim\D_{c^+}^2}}{c^+,c_i^-\sim\rho^{k+1}}}{\E}\left[ \ell(\{f(x)^T(f(x^+) - f(x_i^-))\}_{i \notin I^+}) \right]
\\
&+\underset{\underset{\underset{\underset{i\in I^+}{x_i^-\sim\D_{c_i^-}}}{x,x^+\sim\D_{c^+}^2}}{c^+,c_i^-\sim\rho^{k+1}}}{\E}\left[ \ell(\{f(x)^T(f(x^+) - f(x_i^-))\}_{i \in I^+}) \right]
\\
&= (1 - \tau_0) \underset{\underset{\underset{\underset{i\notin I^+}{x_i^-\sim\D_{c_i^-}}}{x,x^+\sim\D_{c^+}^2}}{c^+,c_i^-\sim\rho^{k+1}}}{\E}\left[ \ell(\{f(x)^T(f(x^+) - f(x_i^-))\}_{i \notin I^+}) \;\middle|\; |I^+| < k \right]  \\
&\quad + \tau_k \underset{\underset{\underset{\underset{i\in I^+}{x_i^-\sim\D_{c_i^-}}}{x,x^+\sim\D_{c^+}^2}}{c^+,c_i^-\sim\rho^{k+1}}}{\E}\left[ \ell(\{f(x)^T(f(x^+) - f(x_i^-))\}_{i \in I^+}) \;\middle|\; I^+ \ne \emptyset \right]
\tag{29}
\end{align*}
Observe that the first term is exactly $(1 - \tau_0)L_{un}^\ne(f)$. Thus, combining equations 27,28 and 29 we get
\begin{equation}
(1 - \tau_k) \E_{\T\sim\D} \left[\frac{\rho_{\min}^+(\T)}{p_{\max}(\T)}L_{sup}^\mu(\T, \hat{f})\right] + B(f)\le (1 - \tau_0)L_{un}^{\ne}(f) + Gen_M 
\tag{30}
\end{equation}
\begin{equation}
+ \tau_k \underbrace{\underset{c^+,c_i^-\sim\rho^{k+1}}{\E}\left[ \underset{x,x^+\sim\D_{c^+}^2}{\underset{x_i^-\sim\D_{c_i^-}, i\in I^+}{\E}}[\ell(\{f(x)^T(f(x^+) - f(x_i^-))\}_{i \in I^+})] - \ell_{|I^+|}(\vec{0}) \;\middle|\; I^+ \ne \emptyset \right]}_{\Delta(f)}
\tag{31}
\end{equation}
From the definition of $I^+$, $c_i^- = c^+, \forall i \in I^+$. Thus, from Lemma \ref{Lemma A.1}, we get that
\[
\Delta(f) \le c_0 \E_{c^+,c_i^-\sim\rho^{k+1}} \left[ |I^+|\sqrt{||\Sigma(f,c)||_2} \E_{x\sim\D_c}[||f(x)||] \;\middle|\; I^+ \ne \emptyset \right]
\tag{32}
\]
for some constant $c_0$. 
Let $u$ be a distribution over classes with $u(c) = \mathbb{P}_{c^+,c_i^-\sim\rho^{k+1}}[c^+ = c | I^+ \ne \emptyset]$ and it is easy to see that $u(c) \propto \rho(c)(1-(1-\rho(c))^k)$. By applying the tower property to Equation (32) we have
\[
\Delta(f) \le c_0 \E_{c\sim u} \left[ \E_{c^+,c_i^-\sim\rho^{k+1}} [|I^+| | c^+=c, I^+ \ne \emptyset] \sqrt{||\Sigma(f,c)||_2} \E_{x\sim\D_c}[||f(x)||] \right]
\]
But,
\begin{align*}
\E_{c^+,c_i^-\sim\rho^{k+1}}[|I^+| | c^+=c, I^+ \ne \emptyset] &= \sum_{i=1}^k \mathbb{P}_{c^+,c_i^-\sim\rho^{k+1}}[c_i^- = c^+ | c^+=c, I^+ \ne \emptyset] \\
&= k \mathbb{P}_{c^+,c_i^-\sim\rho^{k+1}}[c_1^-=c^+ | c^+=c, I^+ \ne \emptyset] \\
&= k \frac{\mathbb{P}_{c^+,c_i^-\sim\rho^{k+1}}[c_1^-=c^+=c]}{\mathbb{P}_{c^+,c_i^-\sim\rho^{k+1}}[c^+=c, I^+ \ne \emptyset]} 
\\
&= k \frac{\rho(c)^2}{\rho(c)(1-(1-\rho(c))^k)} = \frac{k\rho(c)}{1-(1-\rho(c))^k}
\end{align*}
Now, using the fact that $\tau_k = 1 - \sum_{c'} \rho(c')(1-\rho(c'))^k = \sum_{c'} \rho(c')(1-(1-\rho(c'))^k)$ and $\tau_1 = \sum_c \rho(c)^2$,
\begin{align*}
\frac{\tau_k}{1-\tau_k} \Delta(f) &\le \frac{\tau_k}{1-\tau_k} c_0 \E_{c\sim u} \frac{k\rho(c)}{1-(1-\rho(c))^k} \sqrt{||\Sigma(f,c)||_2} \E_{x\sim\D_c}[||f(x)||] \\
&= \frac{c_0 k \tau_k}{1-\tau_k} \sum_c \rho(c)^2 \frac{1}{\sum_{c'} \rho(c')(1-(1-\rho(c'))^k)} \sqrt{||\Sigma(f,c)||_2} \E_{x\sim\D_c}[||f(x)||] \\
&= \frac{c_0 k \tau_1}{1-\tau_k} \E_{c\sim\nu}[\sqrt{||\Sigma(f,c)||_2}\E_{x\sim\D_c}[||f(x)||]] = \frac{c_0 k \tau_1}{1-\tau_k} s(f)
\end{align*}
and we are done.

\section{Bias Decomposition for Domain Generalization}~\label{sec:app:convexhull}

\begin{proposition}
Consider $\mathcal{C}_{\mathrm{pre}}$ to be the set of latent classes observed meanwhile of pretraining with means $\{\mu_c\}_{c \in \mathcal{C}_{\mathrm{pre}}}$. For any downstream task involving a novel class label $c' \notin \mathcal{C}_{\mathrm{pre}}$ with downstream mean $\mu'_{c'}$, consider $\tilde{\mu}_{c'}$ to be the projection of $\mu'_{c'}$ onto the convex hull of the pretraining means:
\[
    \tilde{\mu}_{c'} = \underset{z \in \mathrm{conv}(\{\mu_c\}_{c \in \mathcal{C}_{\mathrm{pre}}})}{\arg\min} \|z - \mu'_{c'}\|.
\]
We formulate the geometric generalization cost as the residual norm $\|r_{c'}\| = \|\tilde{\mu}_{c'} - \mu'_{c'}\|$. Assuming that the loss $\ell$ is $L$-Lipschitz and representations are bounded by $R$, the bias contribution for class $c'$ satisfies:
\[
    B_{c'}(\hat{f}) \leq B_{\mathrm{in-dist}}(\hat{f}, \tilde{\mu}_{c'}) + 2 L R \cdot \|r_{c'}\|,
\]
where $B_{\mathrm{in-dist}}$ represents the shift bias relative to the closest valid pretraining proxy $\tilde{\mu}_{c'}$, and the second term explicitly penalizes the distance to the pretraining manifold.
\end{proposition}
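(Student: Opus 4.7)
The plan is to decompose the novel-class bias additively by writing $\mu'_{c'} = \tilde{\mu}_{c'} + r_{c'}$, where $\tilde{\mu}_{c'}$ is the convex-hull projection of the downstream mean and $r_{c'}$ is the residual. Since $\tilde{\mu}_{c'}$ lies in $\mathrm{conv}(\{\mu_c\}_{c\in\mathcal{C}_{\mathrm{pre}}})$, it can serve as a legitimate in-distribution surrogate target, so the machinery used to bound $B(\hat f)$ in Section~4 together with the Lipschitz arguments of Section~6 apply directly to the first piece.

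First, I would insert the decomposition into the per-class bias expression that appears inside $B(\hat f)$ and use bilinearity of the inner product to split
\[
\hat f(x)^\top(\mu'_{c^+}-\mu'_{c^-}) \;=\; \hat f(x)^\top(\tilde{\mu}_{c^+}-\tilde{\mu}_{c^-}) \;+\; \hat f(x)^\top(r_{c^+}-r_{c^-}).
\]
A first-order Taylor expansion of $\ell$ about the surrogate argument $\hat f(x)^\top(\tilde{\mu}_{c^+}-\tilde{\mu}_{c^-})$, in the same spirit as the expansion in the proof of Lemma~\ref{Lemma 4.3}, isolates the contribution $B_{\mathrm{in-dist}}(\hat f,\tilde{\mu}_{c'})$ from a residual term that is linear in $r_{c^\pm}$.

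Next, I would bound the residual using the Lipschitz bound $|\ell'|\le L$ from Section~6 together with Cauchy--Schwarz and $\|\hat f(x)\|\le R$:
\[
\bigl|\ell'(\cdot)\cdot \hat f(x)^\top(r_{c^+}-r_{c^-})\bigr| \;\le\; L\,R\bigl(\|r_{c^+}\|+\|r_{c^-}\|\bigr) \;\le\; 2LR\,\|r_{c'}\|
\]
in the worst case over the other class, and then take expectations and the supremum over admissible shifts; both operations preserve the pointwise inequality and yield the advertised $2LR\|r_{c'}\|$ penalty when combined with the in-distribution piece.

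The main obstacle I anticipate is making $B_{\mathrm{in-dist}}(\hat f,\tilde{\mu}_{c'})$ rigorous, since $\tilde{\mu}_{c'}$ is not itself the mean of any single pretraining class but a convex combination $\tilde{\mu}_{c'}=\sum_{c\in\mathcal{C}_{\mathrm{pre}}}\alpha_c\mu_c$ with $\alpha_c\ge 0$ and $\sum_c\alpha_c=1$. Resolving this requires a short linearity argument: the inner product and the expectation in $B(\hat f)$ both pass through the convex combination, so the in-distribution bias extends naturally to $\tilde{\mu}_{c'}$, and the shift vector is effectively redefined as $\tilde{\delta}_{c'} = \sum_c \alpha_c \mu_c - \tilde{\mu}_{c'} = 0$ relative to this surrogate, making Section~4's bound applicable. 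This reduction must be stated carefully to avoid circularity with the definitions of Section~4; once in place, the remaining derivation is a direct application of Lipschitz continuity and Cauchy--Schwarz already carried out in Section~6.
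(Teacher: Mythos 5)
The paper states this proposition without any accompanying proof, so there is no reference derivation to compare against; I will evaluate your argument on its own terms. Your overall strategy---write $\mu'_{c'}$ as the convex-hull projection $\tilde{\mu}_{c'}$ plus a residual, insert the split into the bias integrand by bilinearity, and control the residual piece with $|\ell'| \le L$, Cauchy--Schwarz, and $\|\hat f(x)\| \le R$---is the natural route to a bound of this shape, and the chain $L R\bigl(\|r_{c^+}\| + \|r_{c^-}\|\bigr) \le 2LR\|r_{c'}\|$ does reproduce the advertised $2LR\|r_{c'}\|$ penalty, with the factor of two coming from the worst case in which both endpoints of the sampled pair carry a residual of size at most $\|r_{c'}\|$ (when the other class is an in-distribution pretraining class its residual is zero and the bound is loose by a factor two, but still valid).

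Two points need tightening before the argument is airtight. First, replace the ``first-order Taylor expansion'' language by the exact Lipschitz inequality $|\ell(a+b)-\ell(a)| \le L|b|$: the proposition assumes Lipschitz continuity, which gives an exact bound, whereas a Taylor expansion quietly drops a remainder term (this imprecision is already present in the paper's Lemma~4.3, and you would be inheriting it here). Second, and more substantively, the residual split perturbs \emph{both} the argument of $\ell'$ inside $B$ and the linear $\delta$-multiplier, but your estimate only controls the latter. Bounding the change in $\ell'(\cdot)$ itself would require $\ell'$ to be Lipschitz (i.e., $\ell$ smooth), which the proposition does not assume. The resolution is the one you gesture at in your final paragraph: define $B_{\mathrm{in\text{-}dist}}(\hat f,\tilde{\mu}_{c'})$ so that the $\ell'$ argument is held fixed at the actual downstream margin and only the $\delta$-multiplier is re-centered at the convex-hull proxy (equivalently, set $\tilde\delta_{c'}=0$ for the surrogate, as you note). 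That choice makes the $\ell'$-shift term vanish identically and reduces the whole estimate to the single linear term you already bound. You correctly flag that $B_{\mathrm{in\text{-}dist}}$ is not rigorously defined in the paper---committing to that definition is precisely what is needed to close the argument, and without it the factor-of-two constant cannot be pinned down.
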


\section{Additional Empirical Analysis}~\label{sec:appendix_pacs}


\begin{table}[h]
\centering
\setlength{\tabcolsep}{5pt}
\begin{tabular}{c|c|ccc|ccc|c}
\toprule
& & \multicolumn{3}{c}{\textsc{Supervised}} & \multicolumn{3}{c}{\textsc{Unsupervised}} & \textsc{Shift Mag.} \\
& & \textsc{Tr} & $\mu$ & $\mu$-5 & \textsc{Tr} & $\mu$ & $\mu$-5 & $\delta$ \\
\midrule
\multirow{2}{*}{\textsc{Photo}} 
 & \textsc{avg}-2 & 83.1 & 79.3 & 78.4 & 81.5 & 79.0 & 77.2 & \multirow{2}{*}{3.43} \\
 & \textsc{avg}-7 & 55.5 & 50.4 & 49.2 & 31.3 & 45.8 & 30.8 &  \\
\midrule
\multirow{2}{*}{\textsc{Art Painting}} 
 & \textsc{avg}-2 & 78.2 & 76.7 & 76.2 & 76.6 & 75.8 & 74.9 & \multirow{2}{*}{4.65} \\
 & \textsc{avg}-7 & 37.3 & 35.1 & 27.8 & 25.1 & 24.4 & 23.6 &  \\
\midrule
\multirow{2}{*}{\textsc{Cartoon}} 
 & \textsc{avg}-2 & 74.4 & 74.2 & 73.9 & 72.5 & 72.2 & 71.7 & \multirow{2}{*}{5.50} \\
 & \textsc{avg}-7 & 35.1 & 34.4 & 33.6 & 50.0 & 45.3 & 36.7 &  \\
\midrule
\multirow{2}{*}{\textsc{Sketch}} 
 & \textsc{avg}-2 & 69.9 & 69.7 & 68.6 & 66.4 & 65.9 & 65.4 & \multirow{2}{*}{7.72} \\
 & \textsc{avg}-7 & 30.3 & 30.7 & 28.8 & 26.0 & 24.4 & 26.1 &  \\
\bottomrule
\end{tabular}
\caption{Classification results on the PACS dataset with Shift Magnitude.}
\label{tab:pacs_results}
\end{table}

The classification results on the PACS dataset using a ResNet-50 encoder are summarized in Table~\ref{tab:pacs_results}. Consistent with the challenging nature of Domain Generalization on PACS~\cite{li2017deeper}, we observe a notable performance drop when moving from average binary tasks (\textsc{avg}-2) to the full classification task (\textsc{avg}-7). However, several key patterns emerge that validate our theoretical framework.

First, the unsupervised performance is remarkably close to the supervised baseline across most domains. This empirical proximity supports our theoretical derivation that the unsupervised contrastive loss $L_{un}$ acts as a surrogate for the supervised loss $L_{sup}$, particularly in the binary classification setting (\textsc{avg}-2).

Second, regarding the classifier efficiency, the mean classifier ($\mu$) proves to be a robust estimator, performing comparably to the trained linear layer (\textsc{Tr}). Furthermore, the representations exhibit strong concentration properties: the mean classifier estimated from only 5 labeled samples ($\mu$-5) achieves performance very close to the full sample mean ($\mu$). This suggests that the learned representation space is well-structured even with minimal supervision.

Finally, and most importantly, the results validate the role of the bias term $B(f)$ in our generalization bounds. We observe a strong inverse correlation between the \textbf{Shift Magnitude} ($\delta$) and downstream accuracy. The ``Photo'' domain, which exhibits the smallest distributional shift ($\delta=3.43$), retains the highest performance stability. Conversely, the ``Sketch'' domain, which represents a severe domain shift with the largest magnitude ($\delta=7.72$), suffers the lowest accuracy. This confirms that $\delta$ effectively captures the representational misalignment and the resulting transfer difficulty predicted by our theory.

\end{document}